\def\eqref#1{equation~\ref{#1}}
\def\1{\bm{1}}
\DeclareMathAlphabet{\mathsfit}{\encodingdefault}{\sfdefault}{m}{sl}
\SetMathAlphabet{\mathsfit}{bold}{\encodingdefault}{\sfdefault}{bx}{n}
\DeclareMathOperator*{\argmax}{arg\,max}
\DeclareMathOperator*{\argmin}{arg\,min}
\definecolor{myblue}{HTML}{0000A5}
\renewcommand{\eqref}[1]{Eq.~(\ref{#1})}
\newcommand{\ours}{{\small \textsc{CREAM}}\xspace}
\definecolor{deepred}{RGB}{153, 0, 0}
\definecolor{softblue}{RGB}{0, 102, 204}
\newcommand{\redup}{\textcolor{deepred}{$\mathbf{\uparrow}$}}
\newcommand{\bluedown}{\textcolor{softblue}{$\mathbf{\downarrow}$}}
\newcommand{\rev}[1]{{#1}}
\title{CREAM: Consistency Regularized \\Self-Rewarding Language Models}
\author{
Zhaoyang Wang$^{1}$~~~Weilei He$^{2}$~~~Zhiyuan Liang$^{3}$~~~Xuchao Zhang$^{4}$\\
\textbf{~Chetan Bansal}$^4$~~~\textbf{Ying Wei}$^2$~~~\textbf{Weitong Zhang}$^{1}$~~~\textbf{Huaxiu Yao}$^{1}$\\
$^1$University of North Carolina at Chapel Hill\quad$^2$Nanyang Technological University \\
$^3$National University of Singapore\quad$^4$Microsoft Research\\
{\tt \small \{zhaoyang,huaxiu\}@cs.unc.edu~~weitongz@unc.edu~~}}
\begin{document}

\maketitle
\vspace{-1em}
\begin{abstract}
    Recent self-rewarding large language models (LLM) have successfully applied LLM-as-a-Judge to iteratively improve the alignment performance without the need of human annotations for preference data. These methods commonly utilize the same LLM to act as both the policy model (which generates responses) and the reward model (which scores and ranks those responses). The ranked responses are then used as preference pairs to train the LLM via direct alignment technologies (e.g. DPO). However, it is noteworthy that throughout this process, there is no guarantee of accuracy in the rewarding and ranking, which is critical for ensuring accurate rewards and high-quality preference data. Empirical results from relatively small LLMs (e.g., 7B parameters) also indicate that improvements from self-rewarding may diminish after several iterations in certain situations, which we hypothesize is due to accumulated bias in the reward system. This bias can lead to unreliable preference data for training the LLM. To address this issue, we first formulate and analyze the generalized iterative preference fine-tuning framework for self-rewarding language model. We then introduce the regularization to this generalized framework to mitigate the overconfident preference labeling in the self-rewarding process. Based on this theoretical insight, we propose a \textbf{C}onsistency \textbf{R}egularized s\textbf{E}lf-rewarding l\textbf{A}nguage \textbf{M}odel (\ours) that leverages the consistency of rewards across different iterations to regularize the self-rewarding training, helping the model to learn from more reliable preference data. With this explicit regularization, our empirical results demonstrate the superiority of \ours in improving both reward consistency and alignment performance. The code is publicly available at \href{https://github.com/Raibows/CREAM}{https://github.com/Raibows/CREAM}.

\end{abstract}
\section{Introduction}
Large language models (LLMs) have shown impressive capabilities across various tasks, including natural language understanding and generation~\citep{radford2019}. At the same time, LLMs also face alignment challenges such as generating hallucinations and harmful outputs~\citep{survery_of_hal}.
To address these issues, a series of research works has explored preference learning methods such as Reinforcement Learning from Human Feedback (RLHF)~\citep{christiano2017} and direct alignment techniques such as Direct Preference Optimization (DPO)~\citep{dpo} to align the LLMs with human values and preferences.
These alignment methods often require a large number of preference pairs which are indispensable in both RLHF and direct alignment training. However, collecting human-annotated preference pairs is time-consuming and labor-intensive, which seriously limits the scalability and efficiency of these alignment methods.

Recent advancements in self-rewarding language models (SRLMs)~\citep{selfllm} have attracted increasing attention in the field of LLM alignment, which can efficiently synthesize preference data for iterative preference training.
In this method, a single LLM is required to act as two roles, the policy model and the reward model. Given unlabeled prompt data, the LLM first acts as the policy model generating several response candidates. Then, the same LLM acts as the reward model, scoring and ranking these responses. These ranked responses are used as preference pairs to train the LLM with DPO, significantly reducing the reliance on human-annotated data. The above steps can be iteratively repeated to further enhance the performance.
However, SRLMs still face challenges in generating reliable and accurate rewards for annotating the preference pairs, which is critical for ensuring both the quality of preference data and the alignment performance of LLMs.

To address these challenges, we first formulate a generalized iterative preference fine-tuning framework to analyze the self-rewarding training, where this framework can also be adapted to other iterative preference tuning methods.
Through this theoretical framework, we find that the rewarding bias issue in SRLMs arises from the overconfident preference labeling, which enforces the model to distinguish between responses of similar quality. 
For example, both responses in Figure~\ref{fig:inconsistency-example} have high quality judgments from the human.
The SRLM enforces the reward model to make a preference judgment, resulting in noisy and unreliable preference labeling. This can lead to negative impacts on preference tuning the model.
Additionally, the iterative training manner can also accumulate the rewarding bias, further diminishing the benefits of self-improvement. 
\begin{wrapfigure}{r}{0.5\textwidth}
\captionsetup{skip=4pt}
\centering
\includegraphics[width=0.5\textwidth]{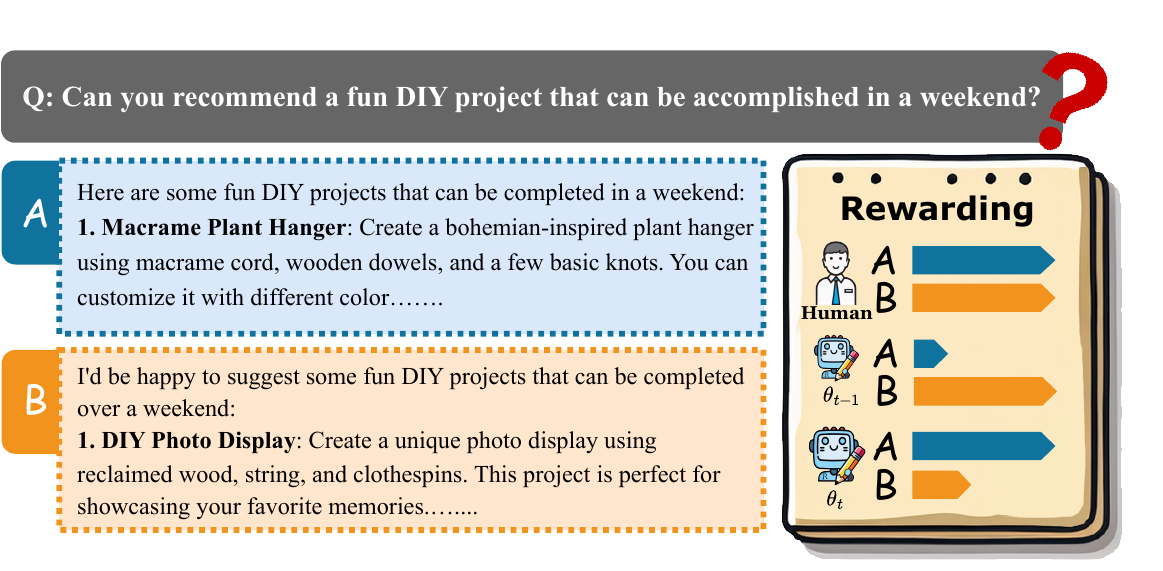}
\caption{\label{fig:inconsistency-example}An example of both two responses are of high quality, which is hard for human to distinguish the preference. While the same model from different iterations have inconsistent rewarding.}
\end{wrapfigure}
From the insights of theoretical analysis, we propose \textbf{C}onsistency \textbf{R}egularized s\textbf{E}lf-rewarding l\textbf{A}nguage \textbf{M}odel (\ours) to mitigate the rewarding bias issue in SRLMs, particularly for broadly accessible 7B-size LLMs.
The core idea behind \ours is that we should not force the model to be overly confident in distinguishing between responses of similar quality.
\textit{But how to tell the preference labeling is reliable or not?}
Out of the self-rewarding scenario, we may employ a pool of external reward models to assist in ranking preferences. When two responses are of similar quality, these external models often produce inconsistent rankings. This inconsistency serves as a signal to indicate the level of confidence in the preference labeling. 
In self-rewarding scenarios, however, integrating such external reward models is not feasible. Fortunately, due to the iterative nature of self-rewarding training, we can use the reward model from the previous iteration to rank preferences and then compare these rankings with those produced by the current model. This comparison provides an estimate of such consistency rate. With this consistency rate, we can regularize the preference training to prevent the model from learning unreliable preference data, thereby mitigating the rewarding bias issue in SRLMs.

In summary, we first formulate a generalized iterative preference fine-tuning framework to analyze the rewarding bias issue in SRLMs.
From the insights of theoretical analysis, we propose \ours as the primary contribution of this paper. \ours leverages the consistency of rewarding across different iterations for regularized preference training, which can effectively mitigate the rewarding bias issue in SRLMs.
Empirical results on a series of natural language benchmarks validate the effectiveness of \ours in mitigating the rewarding bias issue and enhancing the alignment performance of LLMs.

\noindent \textbf{Notations.} Vectors are denoted by lowercase boldface letters, such as $\xb$, and matrices by uppercase boldface letters, such as $\Ab$. For any positive integer $k$, the set ${1, 2, \dots, k}$ is denoted by $[k]$. \rev{Other general sets are denoted by calligraphic uppercase letters}, such as $\mathcal{D}$, with the cardinality of the set represented as $|\mathcal{D}|$. Without ambiguity, we denote $\pi_{\btheta}$ as the language model parameterized by $\btheta$, $\xb$ as the input prompt, and $\yb$ as the output response from the language model. All other notations are defined prior to their first usage. We denote $\ind[\cdot]$ as the indicator function.

\section{Related Works}
\label{gen_inst}

\noindent \textbf{LLM Alignment.}
Alignment lies at the core of LLM research and applications, aiming to ensure that LLMs adhere to human values and preferences.
RLHF establishes the foundational alignment training paradigm~\citep{leike2018scalable,ziegler2019fine,ouyang2022training}, where it leverages human preference feedback to train a reward model, and then uses this reward model to guide the LLM via reinforcement learning algorithms~\citep{schulman2017proximal}.
Recent efforts have focuses on developing direct alignment methods~\citep{dpo,dong2023raft,azar2023general,ethayarajh2024kto,meng2024simpo,hong2024orpo}, in order to reduce the costs and complexity of RLHF and make it more efficient and accessible.
DPO~\citep{dpo}, as a representative direct alignment method, optimizes the LLM with annotated preference pairs, eliminating the need for training an additional reward model.
However, most RLHF and direct alignment methods heavily rely on human-annotated preference data, where the data collection commonly involves human distinguishing the ``good'' responses from the ``bad'' ones, which is time-consuming and labor-intensive~\citep{ouyang2022training,bai2022training}.
Thus, synthesizing preference data with minimal human effort has become a valuable research direction.

\noindent \textbf{Self-Rewarding Language Model.} 
SRLM~\citep{selfllm} has emerged as a promising approach to address the challenge of preference data synthesis in a self-improvement manner. 
This method leverages the LLM itself to act as both the policy model and the reward model. The policy model can generate response candidates for unlabeled prompts, while the reward model uses LLM-as-A-Judge~\citep{zheng2023judging,bai2023benchmarking,dubois2024alpacafarm} prompting to reward and rank these responses based on their quality. The ranked responses are then used as preference pairs to train the LLM via DPO~\citep{dpo}. 
And this process can be iteratively repeated to improve the alignment performance without human intervention.
However, having the same LLM serve as both the policy and the reward model, without any regularization, presents challenges in guaranteeing accurate rewards. This can lead to bias accumulation and noisy preference data, which ultimately harms the training. 
Other similar self-improvement methods~\citep{huang2022large,zelikman2022star,chen2024self,guo2024direct,zhou2024calibrated} often either use the ground truth response to avoid annotation bias, or introduce an additional reward model to reduce the noise in annotations. In contrast, our work neither requires labeled data nor relies on external LLMs. Instead, we propose to use the consistency of rewarding to mitigate the rewarding bias in SRLMs.

\noindent \textbf{Reward Hacking.} 
\rev{In both RLHF and SRLM scenarios, the reward model plays a crucial role in training LLMs~\citep{ouyang2022,anwar2024foundational,selfllm,fisch2024robust}.  For RLHF, reward hacking is a phenomenon where models exploit flaws or biases in reward models to maximize scores without aligning with the intended goals~\citep{anwar2024foundational}. To mitigate this issue, various ensemble rewarding methods~\citep{coste2023reward,eisenstein2023helping,rame2024warm,zhang2024improvingreinforcementlearninghuman} such as ensemble-based conservative optimization~\citep{coste2023reward} and averaging rewards in the weight space~\citep{rame2024warm} have been proposed to improve reliability and robustness. However, these works mainly focused on estimating the rewards, while \ours in the self-rewarding scenario only uses rewards for comparing the responses to annotate preference pairs instead of maximizing the rewards. Besides, \ours uses regularization instead of conservative value estimation to mitigate the rewarding bias issue. 
Though applying ensemble rewarding methods such as robust preference optimization~\citep{fisch2024robust} which proposes reward distillation and pessimistic ensemble is feasible, \ours does not rely on ensemble rewarding which reduces the computational cost in SRLMs. Additionally, \ours leverages the iterative nature of SRLM to estimate the consistency between iterations to regularize the preference pair, which is more suitable for iterative preference fine-tuning. Further discussion is in Appendix~\ref{app:sec:cmp-ensemble}.
}

\vspace{-0.5em}
\section{Methodology}\label{method}
\vspace{-0.5em}
In this section, we first formulate the generalized iterative preference fine-tuning framework for self-rewarding, RL with AI feedback, and other iterative preference tuning methods.
Next, we introduce the motivation behind the proposed consistency regularized self-rewarding method.
Finally, we present the practical implementation algorithm of \ours in detail.

\vspace{-0.3em}
\subsection{Generalized Iterative Preference Fine-Tuning Framework}
\vspace{-0.3em}
We assume that we can access to the dataset with response $\cD_{\text{S}}$ and the prompt dataset without response $\cD_{\text{U}}$. The objective is to iteratively minimize the following loss with respect to the neural network parameter $\btheta$ and a label function $z$ as
\begin{align}
\cL(\btheta, z) &= \cL_{\text{SFT}}(\btheta; \cD_{\text{S}}) + \EE_{\xb \sim \cD_{\text{U}}; \yb, \yb' \sim \pi_{\btheta_t}(\cdot | \xb)} [\cL_{\text{DPO}}(\btheta; \yb, \yb', \xb, z)]. \label{eq:slrm}
\end{align}
where the first term $\cL_{\text{SFT}}(\btheta; \cD_{\text{S}})$ aligns the model $\pi_{\btheta}$ to the SFT data. We note here that any potential SFT methods~\citep{ouyang2022,yuan2023rrhf,dong2023raft,chen2024self}, or the methods without SFT data ($\cL_{\text{SFT}} = 0$) can be adapted in this framework. The second term $\EE[\cL_{\text{DPO}}]$ corresponds to learning from the preference data pair \{$\yb, \yb'$\} generated by the current model $\btheta_t$. The labeling function $z(\yb, \yb', \xb) \in \{0, 1\}$ provides the preference judgment between $\yb$ and $\yb'$ for the DPO loss, where $z(\yb, \yb', \xb) = 1$ means $\yb \succ \yb'$ and $z(\yb, \yb', \xb) = 0$ means $\yb \prec \yb'$. The DPO loss $\cL_{\text{DPO}}$ is defined as follows:
\begin{align}
    \cL_{\text {DPO}}(\btheta; \yb, \yb', \xb, z) &= - z(\yb, \yb', \xb) \log \sigma\left(\log \left(\frac{\pi_{\btheta}(\yb | \xb)}{\pi_{\text{ref}}(\yb | \xb)}\right) - \log \left(\frac{\pi_{\btheta}(\yb' | \xb)}{\pi_{\text{ref}}(\yb' | \xb)}\right)\right) \notag \\
    &\quad - (1 - z(\yb, \yb', \xb)) \log \sigma\left(\log \left(\frac{\pi_{\btheta}(\yb' | \xb)}{\pi_{\text{ref}}(\yb' | \xb)}\right) - \log \left(\frac{\pi_{\btheta}(\yb | \xb)}{\pi_{\text{ref}}(\yb | \xb)}\right)\right),
    \label{eq:dpo}
\end{align}
where $\pi_{\text{ref}}$ is the reference model for KL divergence regularization, and $\sigma(\cdot)$ is the sigmoid function.
The proposed loss $\cL(\btheta, z)$ in~\eqref{eq:slrm} represents all iterative preference fine-tuning algorithms. For the reinforcement learning (RL) with human feedback~\citep{ouyang2022}, $z$ is the human preference comparing $\yb$ and $\yb'$. 
For the RL with AI feedback, $z$ is the oracle reward model like GPT-4~\citep{gpt4}. 
For the self-rewarding language model~\citep{chen2024self}, $z$ is given by comparing the reward score generated from the language model itself, often with LLM-as-a-Judge prompting. 
However, as aforementioned, we note that such prompt rewarding method may only be feasible  for larger and advanced LLMs such as Llama-70B~\citep{llama2}. For smaller models such as Llama-7B that do not have complex instruction following and reasoning abilities, we instead propose to leverage the intrinsic reward model~\citep{dpo}
\begin{align}
    r_{\btheta}(\xb, \yb) \propto [\log \pi_{\btheta}(\yb| \xb) - \log \pi_{\text{ref}}(\yb | \xb)] \notag 
\end{align}
to reward and rank the responses for annotating preference pairs.
Therefore, the choice of preference labeling function $z$ is closely connected with the language model parameter $\btheta$. Then, we introduce the following two-step optimization algorithm to solve~\eqref{eq:slrm}.

\noindent\textbf{Step 1.} (Preference-labeling step) Keep $\btheta = \btheta_t$ fixed, select function $z$ to minimize $\cL_{\text{DPO}}$. In particular, letting $\btheta = \btheta_t$ in~\eqref{eq:dpo}, solution for $z(\yb, \yb', \xb) = \argmin_{z} \cL_{\text{DPO}}(\btheta_t; \yb, \yb', \xb, z)$ is
\begin{align}
    z_{t+1}(\yb, \yb', \xb) = \ind\left[\log \pi_{\btheta_t}(\yb | \xb)  - \log  \pi_{\text{ref}}(\yb | \xb) \ge  \log \pi_{\btheta_t}(\yb' | \xb) - \log  \pi_{\text{ref}}(\yb' | \xb)\right]. \label{eq:z}
\end{align}

\noindent\textbf{Step 2.} (Learning step) Keep $z$ as of~\eqref{eq:z}, minimize loss function $\cL(\btheta, z_{t+1})$ with respect to $\btheta$ and get $\btheta_{t+1} = \argmin_{\btheta} \cL(\btheta, z_{t+1})$.

Different from existing methods, the proposed two-step optimization method directly uses the intrinsic reward model to generate the preference data.
This approach is particularly feasible for small LLMs, which lack the capacity to effectively use LLM-as-a-Judge prompts~\citep{zheng2023judging} for rewarding and ranking.
We note that the proposed two-step method is similar to the Expectation-Maximization algorithm and self-training paradigm~\citep{zou2019confidence}. This similarity is supported by the following theorem, which suggests the convergence of the proposed two-step algorithm.
\begin{theorem}\label{thm:convergence}
    Suppose the optimization $\btheta_{t+1} = \argmin_{\btheta} \cL(\btheta, z_{t+1})$ is solvable and the SFT loss $\cL_{\text{SFT}}(\btheta; \cD_{\text{S}}) \ge 0$ for all $\btheta$ and $\cD_{\text{S}}$, the proposed two-step optimization method converges.
\end{theorem}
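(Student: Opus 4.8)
The plan is to show that the sequence of loss values $\cL(\btheta_t, z_t)$ is monotonically non-increasing and bounded below, hence convergent. This is the standard argument for EM-type alternating minimization, and the two-step structure of the algorithm is tailored exactly for this.

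First I would set up the notation: let $\btheta_0$ be the initialization, and for each $t \ge 0$ let $z_{t+1}$ be chosen in Step 1 and $\btheta_{t+1}$ in Step 2. The key observation is that each of the two steps minimizes $\cL$ over one argument while holding the other fixed, so neither step can increase the objective. Concretely, for the \textbf{preference-labeling step}, since $\cL_{\text{SFT}}(\btheta_t; \cD_{\text{S}})$ does not depend on $z$, minimizing $\cL(\btheta_t, z)$ over $z$ is equivalent to minimizing the expected DPO term, and $z_{t+1}$ as defined in Eq.~(\ref{eq:z}) is the pointwise minimizer (for each triple $(\yb, \yb', \xb)$ the DPO loss in Eq.~(\ref{eq:dpo}) is a convex combination of $-\log\sigma(\cdot)$ and $-\log\sigma(-\cdot)$, and the indicator picks whichever of the two log-sigmoid terms has the larger argument, i.e. the smaller loss). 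Hence $\cL(\btheta_t, z_{t+1}) \le \cL(\btheta_t, z_t)$. For the \textbf{learning step}, $\btheta_{t+1} = \argmin_{\btheta} \cL(\btheta, z_{t+1})$ by definition, so $\cL(\btheta_{t+1}, z_{t+1}) \le \cL(\btheta_t, z_{t+1})$. Chaining these gives
\begin{align}
\cL(\btheta_{t+1}, z_{t+1}) \le \cL(\btheta_t, z_{t+1}) \le \cL(\btheta_t, z_t), \notag
\end{align}
so the sequence $\{\cL(\btheta_t, z_t)\}_{t \ge 0}$ is non-increasing.

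Next I would establish the lower bound. The DPO loss in Eq.~(\ref{eq:dpo}) is a sum of terms of the form $-c \log\sigma(u)$ with $c \in \{z, 1-z\} \subseteq [0,1]$ and $-\log\sigma(u) > 0$, so $\cL_{\text{DPO}} \ge 0$ pointwise, hence its expectation over $\xb \sim \cD_{\text{U}}$ and $\yb, \yb' \sim \pi_{\btheta_t}$ is $\ge 0$. Combined with the hypothesis $\cL_{\text{SFT}}(\btheta; \cD_{\text{S}}) \ge 0$, we get $\cL(\btheta, z) \ge 0$ for all $\btheta, z$. A monotone non-increasing sequence bounded below by $0$ converges by the monotone convergence theorem, which is the claimed conclusion.

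The main subtlety — rather than a deep obstacle — is being precise about what ``converges'' means here: the theorem as stated asserts convergence of the algorithm, and the clean, fully rigorous statement one can prove is convergence of the objective values $\cL(\btheta_t, z_t)$, not necessarily convergence of the iterates $\btheta_t$ themselves (which would require additional regularity/compactness assumptions). I would therefore phrase the conclusion in terms of the loss sequence and note that the solvability assumption on $\argmin_{\btheta}$ is what guarantees Step 2 is well-defined at every iteration. The only other point worth a careful sentence is justifying that $z_{t+1}$ from Eq.~(\ref{eq:z}) is genuinely the argmin of the DPO loss with $\btheta = \btheta_t$ — this is the one spot where a tiny computation (comparing $-\log\sigma(\Delta)$ versus $-\log\sigma(-\Delta)$ as $\Delta$, the difference of log-ratios, is positive or negative, using monotonicity of $\sigma$) is needed, but it is routine.
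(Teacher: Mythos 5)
Your proof is correct and follows essentially the same route as the paper's: both establish $\cL(\btheta_t, z_{t+1}) \le \cL(\btheta_t, z_t)$ from the labeling step and $\cL(\btheta_{t+1}, z_{t+1}) \le \cL(\btheta_t, z_{t+1})$ from the learning step, chain them into a monotone non-increasing sequence, and conclude by boundedness below. You are in fact slightly more careful than the paper, which asserts the bound without verifying $\cL_{\text{DPO}} \ge 0$ (and contains the slip ``upper bounded by $0$'' where ``lower bounded'' is meant), and your remark that only the objective values, not the iterates, are shown to converge is a fair clarification of what the theorem actually delivers.
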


\subsection{Consistency Regularized Self-Rewarding} \label{sec:reg}
The generalized framework presented in~\eqref{eq:slrm} assumes the human feedback or GPT-4 are all reliable so that the preference labeling function $z$ is trustworthy. 
However, for SRLMs, the accuracy of preference labeling is not always guaranteed.
Therefore, treating all selected preference labels as ``ground truth'' by encoding them as hard labels can lead to overconfident mistakes, potentially propagating biases and inaccuracies from the LLMs.
Taking Figure~\ref{fig:inconsistency-example} as an example, both the two responses $\yb$ and $\yb'$ are judged by humans to be of high quality.
\textit{Forcing the model to be overly confident in distinguishing between these two responses $\{\yb, \yb'\}$ of similar quality can negatively impact the performance of SRLMs during training.}

\begin{figure}[t]
    \centering
    \includegraphics[width=0.9\columnwidth]{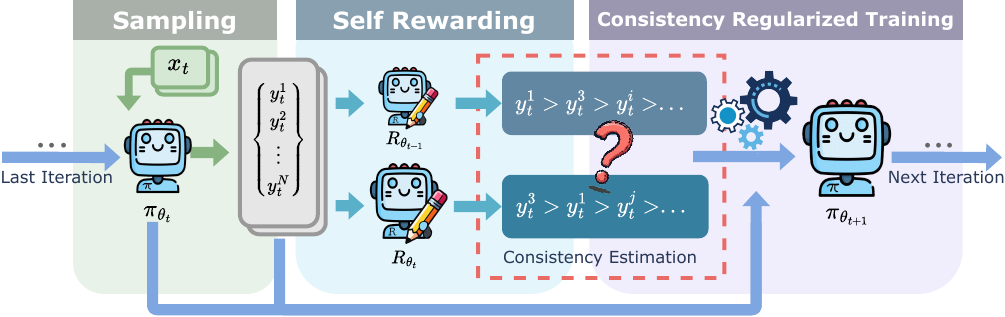}
    \caption{The flow of \ours. In the response sampling stage, the policy model $\pi_{\btheta_{t}}$ generates $N$ responses. 
    After that, \ours uses the reward model $R_{\btheta_{t-1}}$ from the previous iteration to reward and rank these responses. 
    Then, the rankings are compared with those generated by current reward model $R_{\btheta_{t}}$ to estimate the consistency rate. 
    Finally, the policy model $\pi_{\btheta_{t}}$ is fine-tuned with consistency regularized preference training objective, resulting in the model $\pi_{\btheta_{t+1}}$ for next iteration. 
    }
    \label{fig:main}
    \vspace{-1em}
\end{figure}

This rewarding bias issue motivates us to mitigate such ambiguity by introducing a consistency-regularized self-rewarding language model, \ours.
Specifically, for a pair of responses with very similar quality, their oracle reward scores should ideally be very close to each other. 
Particularly, when multiple reward models are available, it is likely that some models will rank one response as superior, while others may rank the opposite response as better, resulting in high ranking inconsistency (i.e., low ranking consistency) among these models.
Based on this, \ours aims to prevent the model from learning from preference pairs with low consistency. Instead, it focuses solely on preference pairs with high consistency across different reward models, thereby mitigating the rewarding bias issue and stabilize the learning process to some extent.
% \zhaoyang{May be can add an explanation as ``since ensemble judgment is more reliable than a single model''. }
From the theoretical perceptive, we can introduce a regularization term to~\eqref{eq:slrm} as
\begin{align}
\cL(\btheta, z) &= \cL_{\text{SFT}}(\btheta; \cD_{\text{S}}) + \EE_{\xb \sim \cD_{\text{U}}; \yb, \yb' \sim \pi_{\btheta_t}(\cdot | \xb)} [\cL_{\text{DPO}}(\btheta; \yb, \yb', \xb, z) + \lambda \cL_{\text{Reg}}(\btheta; \yb, \yb', \xb)], \label{eq:slrm-reg}
\end{align}
where the regularization term $\cL_{\text{Reg}}(\btheta; \yb, \yb', \xb)$ prevents the model $\pi_{\btheta}$ from overconfidence in distinguishing the preference of $\{\yb, \yb'\}$ of similar quality, which is quantified in the following lemma. 
\begin{lemma}\label{lm:KL}
Let the random variable $z = z(\yb, \yb', \xb)$ be defined as $z(\yb, \yb', \xb) = \ind[\yb \succ \yb' | \xb]$. The Bradley-Terry model~\citep{bradley1952rank} for the probability of $z$ under parameter $\btheta$ is given by
\begin{align}
    P_{\btheta}(z) = P_{\btheta}(\ind[\yb \succ \yb' | \xb]) = \sigma \left(\log (\pi_{\btheta}(\yb | \xb) / \pi_{\text{ref}}(\yb | \xb)) - \log (\pi_{\btheta}(\yb' | \xb) / \pi_{\text{ref}}(\yb' | \xb))\right), \notag 
\end{align}
Letting the regularization $\cL_{\text{Reg}}$ be defined by
\begin{align}
\cL_{\text{Reg}}(\btheta; \yb, \yb', \xb) &= -\log \sigma\left(\log (\pi_{\btheta}(\yb | \xb) / \pi_{\text{ref}}(\yb | \xb)) - \log (\pi_{\btheta}(\yb' | \xb) /  \pi_{\text{ref}}(\yb' | \xb))\right) \notag \\
&\quad - \log \sigma\left((\log \pi_{\btheta}(\yb' | \xb) /  \pi_{\text{ref}}(\yb' | \xb)) - (\log \pi_{\btheta}(\yb | \xb) /  \pi_{\text{ref}}(\yb | \xb))\right) \label{eq:reg}.
\end{align}
Then the expected regularized loss under the model $\btheta_t$ is given by:
\begin{align}
\EE_{\yb, \yb' \sim \pi_{\btheta_t}(\cdot | \xb)} \cL_{\text{reg}}(\btheta; \yb, \yb', \xb) = 2 \, \mathbb{KL}(u(\cdot)\parallel P_{\btheta}(\cdot)),
\end{align}
where $u(z)$ is the uniform binary distribution, i.e., $u(z = 0) = u(z = 1) = 0.5$.
\end{lemma}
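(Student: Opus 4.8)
The plan is to recognize the regularization term $\cL_{\text{Reg}}$ as (twice) the binary cross-entropy between the uniform distribution $u$ and the Bradley--Terry distribution $P_{\btheta}$, after taking the expectation over $\yb, \yb' \sim \pi_{\btheta_t}(\cdot|\xb)$. First I would introduce the shorthand $h_{\btheta}(\yb,\yb',\xb) = \log(\pi_{\btheta}(\yb|\xb)/\pi_{\text{ref}}(\yb|\xb)) - \log(\pi_{\btheta}(\yb'|\xb)/\pi_{\text{ref}}(\yb'|\xb))$, so that $P_{\btheta}(z=1) = \sigma(h_{\btheta})$ and, using $1 - \sigma(t) = \sigma(-t)$, also $P_{\btheta}(z=0) = \sigma(-h_{\btheta})$. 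With this notation the definition in~\eqref{eq:reg} reads simply $\cL_{\text{Reg}}(\btheta;\yb,\yb',\xb) = -\log\sigma(h_{\btheta}) - \log\sigma(-h_{\btheta}) = -\log P_{\btheta}(z=1) - \log P_{\btheta}(z=0)$.

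Next I would rewrite this as a cross-entropy against the uniform distribution. Since $u(z=0) = u(z=1) = \tfrac12$, we have $-\log P_{\btheta}(z=1) - \log P_{\btheta}(z=0) = 2\sum_{z \in \{0,1\}} u(z)\big(-\log P_{\btheta}(z)\big) = 2\,H(u, P_{\btheta})$, the factor $2$ coming from $1/u(z) = 2$. Then I would use the standard identity $H(u, P_{\btheta}) = H(u) + \mathbb{KL}(u \parallel P_{\btheta})$, where $H(u) = \log 2$ is the (constant) entropy of the fair binary distribution. This gives $\cL_{\text{Reg}}(\btheta;\yb,\yb',\xb) = 2\log 2 + 2\,\mathbb{KL}(u(\cdot)\parallel P_{\btheta}(\cdot))$ pointwise in $(\yb,\yb',\xb)$.

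Finally I would take the expectation $\EE_{\yb,\yb' \sim \pi_{\btheta_t}(\cdot|\xb)}$. The one subtlety is that $P_{\btheta}$ implicitly depends on the sampled pair $(\yb,\yb')$ through $h_{\btheta}(\yb,\yb',\xb)$, so the KL term inside the expectation is itself a random quantity; the expectation simply averages it, and the additive constant $2\log 2$ passes through untouched. Dropping the constant (or absorbing it, as the statement does, since it does not affect the minimizer in $\btheta$) yields $\EE_{\yb,\yb'\sim\pi_{\btheta_t}(\cdot|\xb)}\cL_{\text{reg}}(\btheta;\yb,\yb',\xb) = 2\,\mathbb{KL}(u(\cdot)\parallel P_{\btheta}(\cdot))$ as claimed.

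I do not anticipate a serious obstacle here: the result is essentially a bookkeeping exercise once the sigmoid identity $1-\sigma(t)=\sigma(-t)$ and the cross-entropy/KL decomposition are in place. The only points that merit care are (i) correctly matching the two $\log\sigma$ terms in~\eqref{eq:reg} to $P_{\btheta}(z=1)$ and $P_{\btheta}(z=0)$ — in particular noting that the second term's argument is the negation of the first — and (ii) being explicit that the equality holds up to the additive constant $2\log 2$, which is what lets the conclusion be stated cleanly as a scalar multiple of the KL divergence. I would also remark, for interpretive value, that $\mathbb{KL}(u\parallel P_{\btheta})$ is minimized exactly when $P_{\btheta}(z=1)=P_{\btheta}(z=0)=\tfrac12$, i.e. when $h_{\btheta}=0$, which is precisely the ``do not be overconfident on similar-quality pairs'' behavior the regularizer is meant to encourage.
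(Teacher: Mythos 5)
Your proposal is correct, and it is in fact cleaner and more careful than the paper's own argument. The paper proves the identity by first splitting the expectation $\EE_{\yb,\yb'\sim\pi_{\btheta_t}(\cdot|\xb)}$ over the two events $\yb\prec\yb'$ and $\yb\succeq\yb'$, extracting the probabilities $P_{\btheta_t}(z=0)$ and $P_{\btheta_t}(z=1)$, and then invoking the exchangeability of $(\yb,\yb')$ to conclude $P_{\btheta_t}(z=0)=P_{\btheta_t}(z=1)=1/2$ before recognizing the KL divergence. You bypass that entirely by observing that $-\log\sigma(h_{\btheta})-\log\sigma(-h_{\btheta}) = -\log P_{\btheta}(z=1)-\log P_{\btheta}(z=0) = 2H(u,P_{\btheta})$ is already a \emph{pointwise} identity in $(\yb,\yb')$, so the event decomposition is unnecessary; the expectation just averages the resulting per-pair KL terms. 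This buys two things the paper's proof glosses over. First, you make explicit that the equality holds only up to the additive constant $2\log 2$ coming from $H(u,P_{\btheta})=H(u)+\mathrm{KL}(u\parallel P_{\btheta})$ with $H(u)=\log 2$; the lemma as stated omits this constant, which is harmless for the minimization over $\btheta$ but should be flagged, as you do. Second, you get the direction of the KL divergence right: the cross-entropy decomposition naturally produces $\mathrm{KL}(u\parallel P_{\btheta})$, matching the lemma statement, whereas the paper's own final display writes $\mathrm{KL}(P_{\btheta}(z)\parallel u(z))$ with the arguments reversed (and also drops a minus sign in its first expansion). Your closing remark that the regularizer is minimized exactly at $h_{\btheta}=0$ is a nice sanity check consistent with the paper's interpretation of $\cL_{\text{Reg}}$ as pulling the preference distribution toward uniform.
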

As Lemma~\ref{lm:KL} suggests, the $\cL_{\text{Reg}}$ will regularize the preference between 
$\{\yb, \yb'\}$ that has similar quality to a uniform distribution. 
Then the following theorem suggests that using $\cL_{\text{DPO}} + \lambda \cL_{\text{Reg}}$ corresponds to the soft-labeled DPO which we implemented in \ours.
\begin{theorem}\label{thm:rdpo}
For all $\yb, \yb', \xb, z$, minimizing 
\begin{align}
\cL(\btheta, z) = \cL_{\text{SFT}}(\btheta; \cD_\text{SFT}) + \EE_{\xb \sim \cD_{\text{U}}; \yb, \yb' \sim \pi_{\btheta_t}(\cdot | \xb)}\left[\cL_{\text{DPO}}(\btheta; \yb, \yb', \xb, z) + \lambda \cL_{\text{Reg}}(\btheta; \yb, \yb', \xb)\right] \notag 
\end{align}
is equivalent with minimizing
\begin{align}
\cL(\btheta, z) &= \frac{1}{1 + 2\lambda} \cL_{\text{SFT}}(\btheta; \cD_\text{S}) \notag \\
&\quad + \EE_{\xb \sim \cD_{U}; \yb, \yb' \sim \pi_{\btheta_t}(\cdot | \xb)}[\cC_{\lambda}\cL_{\text{DPO}}(\btheta; \yb, \yb', \xb, z) + (1 - \cC_\lambda) \cL_{\text{DPO}}(\btheta; \yb, \yb', \xb, 1 - z)], \label{eq:softdpo}
\end{align}
where the $1 - z$ reverses the preference order of $z(\yb, \yb', \xb)$ and $\cC_\lambda = (1 + \lambda) / (1 + 2\lambda)$. \rev{We choose to reverse the preference order if there is evidence that the annotated preference data is reversed. And the final form can also be viewed as the label smoothing. Details are in Appendix~\ref{sec:app-cream-loss-difference}.}
\end{theorem}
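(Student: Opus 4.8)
The plan is to prove \Cref{thm:rdpo} by a direct algebraic manipulation of the per-pair objective, rewriting the sum $\cL_{\text{DPO}}(\btheta; \yb, \yb', \xb, z) + \lambda \cL_{\text{Reg}}(\btheta; \yb, \yb', \xb)$ as a convex combination of the two oppositely-labeled DPO losses plus an overall positive multiplicative constant. First I would introduce shorthand for the only two quantities that appear, namely $a = \log\sigma\bigl(\log(\pi_{\btheta}(\yb|\xb)/\pi_{\text{ref}}(\yb|\xb)) - \log(\pi_{\btheta}(\yb'|\xb)/\pi_{\text{ref}}(\yb'|\xb))\bigr)$ and $b$ defined the same way with $\yb$ and $\yb'$ swapped, so that $\cL_{\text{DPO}}(\btheta;\yb,\yb',\xb,z) = -z\,a - (1-z)\,b$, the reversed-label loss is $\cL_{\text{DPO}}(\btheta;\yb,\yb',\xb,1-z) = -(1-z)\,a - z\,b$, and from \eqref{eq:reg} the regularizer is simply $\cL_{\text{Reg}}(\btheta;\yb,\yb',\xb) = -a - b$. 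Note that $z \in \{0,1\}$ implies $z^2 = z$, which is the identity that makes the bookkeeping collapse cleanly.

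The core computation is then to check that
\begin{align}
\cC_\lambda\,\cL_{\text{DPO}}(\btheta;\yb,\yb',\xb,z) + (1-\cC_\lambda)\,\cL_{\text{DPO}}(\btheta;\yb,\yb',\xb,1-z)
= \frac{1}{1+2\lambda}\bigl[\cL_{\text{DPO}}(\btheta;\yb,\yb',\xb,z) + \lambda\,\cL_{\text{Reg}}(\btheta;\yb,\yb',\xb)\bigr]. \notag
\end{align}
I would expand the left-hand side using the shorthand: it equals $-\bigl[\cC_\lambda z + (1-\cC_\lambda)(1-z)\bigr] a - \bigl[\cC_\lambda(1-z) + (1-\cC_\lambda)z\bigr] b$. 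Plugging in $\cC_\lambda = (1+\lambda)/(1+2\lambda)$ and simplifying the bracket coefficients, the coefficient of $-a$ becomes $\bigl((1+\lambda)z + \lambda(1-z)\bigr)/(1+2\lambda) = (z+\lambda)/(1+2\lambda)$ and symmetrically the coefficient of $-b$ becomes $(1-z+\lambda)/(1+2\lambda)$. On the other side, $\frac{1}{1+2\lambda}[\,-za - (1-z)b - \lambda a - \lambda b\,]$ has coefficient of $-a$ equal to $(z+\lambda)/(1+2\lambda)$ and coefficient of $-b$ equal to $(1-z+\lambda)/(1+2\lambda)$, matching exactly. Since the SFT term carries the same global factor $\tfrac{1}{1+2\lambda}$ and the expectation over $\xb, \yb, \yb'$ is linear, scaling the entire objective \eqref{eq:slrm-reg} by the positive constant $\tfrac{1}{1+2\lambda}$ does not change its minimizer, which gives the claimed equivalence; I would remark that one may equivalently invoke \Cref{lm:KL} to replace the per-pair $\cL_{\text{Reg}}$ by its expectation $2\,\mathbb{KL}(u\parallel P_{\btheta})$, but the per-pair identity above is cleaner and makes the label-smoothing interpretation manifest since $\cC_\lambda \in (1/2, 1)$ for $\lambda > 0$.

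There is no serious obstacle here — the statement is essentially a change of variables — so the only thing to be careful about is the direction of the ``reversal'' convention and making sure the convex-combination weights land in $[0,1]$: I would explicitly note that for $\lambda \ge 0$ we have $\tfrac12 < \cC_\lambda \le 1$, so \eqref{eq:softdpo} is a genuine (asymmetric, toward the original label) smoothing of the hard-label DPO loss, consistent with the remark deferred to Appendix~\ref{sec:app-cream-loss-difference}. The one place where a reader might stumble is the implicit claim that ``minimizing $\cL$'' and ``minimizing $c\,\cL$'' for $c>0$ have the same argmin, together with the fact that the reference model $\pi_{\text{ref}}$ and the sampling model $\pi_{\btheta_t}$ are held fixed during the learning step (Step 2), so that $a$ and $b$ are genuinely functions of the free variable $\btheta$ only and the expectation's sampling distribution does not depend on $\btheta$; I would state both of these as one sentence of justification rather than belabor them.
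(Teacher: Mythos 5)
Your proof is correct and follows essentially the same route as the paper's: absorb $\lambda\cL_{\text{Reg}}$ into the coefficients of the two log-sigmoid terms, normalize by the positive constant $1+2\lambda$, and read off the convex combination, with the scaling-invariance of the argmin closing the argument. The only difference is cosmetic: your per-pair identity matches the theorem's stated $\cC_\lambda=(1+\lambda)/(1+2\lambda)$ directly and handles both values of $z$ at once, whereas the paper's proof specializes to $z=1$ and relabels $\cC_\lambda=\lambda/(1+2\lambda)$ on the reversed term, which yields the same decomposition.
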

Theorem~\ref{thm:rdpo} suggests that instead of calculating the regularization term $\cL_{\text{Reg}}$, we can use the soft-labeled DPO to train~\eqref{eq:softdpo}. In particular, when $\lambda = 0$, $\cC_\lambda = 0$ and~\eqref{eq:softdpo} degenerates to~\eqref{eq:slrm}. This represents the case where the preference label $z$ is trustworthy from human or some oracle reward models (e.g., GPT-4). In other words, $\lambda$ represents the \emph{confidence} of the label function $z$. 
Specially, since in our two-step optimization paradigm, the label function $z$ is directly derived from the previous model $\pi_{\btheta_t}$, we can measure the performance of $\pi_{\btheta_t}$ using the consistency between model $\btheta_t$ and the baseline model (e.g., external reward model) $\btheta_t'$, defined by
\begin{align}
    \lambda(\xb) = 2\EE_{\yb, \yb' \sim \pi_{\btheta_t}(\cdot | \xb)} \ind[\yb \succ \yb' | \xb, \btheta_t]\ind[\yb \succ \yb' | \xb, \btheta_t'], \label{eq:lambda}
\end{align}
and when $\lambda \rightarrow 0$, $\cC_\lambda \approx 1 - \lambda$ representing the consistency of model $\btheta_t$ and $\btheta_t'$. $\ind[\yb \succ \yb' | \xb, \btheta_t]$ means the response $\yb$ is better than $\yb'$ given the prompt $\xb$ and language model parameter $\btheta_t$, i.e.,
\begin{align}
    \ind[\log (\pi_{\btheta_t}(\yb | \xb) /  \pi_{\text{ref}}(\yb | \xb)) - \log (\pi_{\btheta_t}(\yb' | \xb) / \pi_{\text{ref}}(\yb' | \xb))], \notag 
\end{align}
and similar definition applies to $\ind[\yb \succ \yb' | \xb, \btheta_t']$.

\begin{algorithm}[t]
\small
\caption{Consistency-Regularized Self-Rewarding Language Model}
\label{alg:ours}
\begin{algorithmic}[1]
\REQUIRE seed SFT dataset $\cD_\text{S}$; unlabeled prompt dataset $\cD_\text{U}$; initial model parameter $\btheta_{0}$;
\REQUIRE number of iterations $T$; learning rate $\eta$
\STATE {\color{blue}/* SFT training */}
\STATE Obtain $\btheta_1$ by taking the gradient steps over loss $L_1(\btheta) = \sum_{(\xb, \yb) \in \cD_{\text{S}}} \log \pi_{\btheta}(\yb | \xb)$ from $\btheta_0$ \label{ln:sft}
\STATE {\color{blue}/* Iterative Preference Training training */}
\FOR{$t = 1$ to $T$}
\STATE Sample $\{\yb_{ij}\}_{i=1}^N \sim \pi_{\btheta_t}(\cdot | \xb_j)$ for all $\xb_j \in \cD_\text{U}$ {\color{blue}~~// Response Sampling} \label{ln:sample}
\STATE Compute reward $r_{ij} = \log \pi_{\btheta_{t}}(\yb_{ij} | \xb_i) - \log \pi_{\btheta_0}(\yb_{ij} | \xb_i)$ for all $i \in [N], j \in [|\cD_\text{U}|]$ \label{ln:reward}
\STATE Obtain rank $J_{ij}$ for all $y_{ij}$ using reward $r_{ij}$ {\color{blue}~~// Rank on model $\btheta_t$} \label{ln:rank1}
\STATE Compute reward $r'_{ij} = \log \pi_{\btheta_{t-1}}(\yb_{ij} | \xb_i) - \log \pi_{\btheta_0}(\yb_{ij} | \xb_i)$ for all $i \in [N], j \in [|\cD_\text{U}|]$
\STATE Obtain rank $K_{ij}$ for all $y_{ij}$ using reward $r'_{ij}$ {\color{blue}~~// Rank on model $\btheta_{t-1}$} \label{ln:rank2}
\STATE Compute $\tau_j = \tau(\{J_{ij}\}_i, \{K_{ij}\}_i)$ according to~\eqref{eq:kendall-tau} for all $j \in [|\cD_\text{U}|]$
\STATE Compute consistency rate $\cC = |\cD_\text{U}|^{-1}\sum_j (\tau_j+1) / 2 $ {\color{blue}// Adaptive consistency regularization} \label{ln:adapt}
\STATE Compose preference dataset $\cD_{\text{DPO}}$ using pairs $\{\xb_j, \yb_j^+, \yb_j^-\}_j$ according to~\eqref{eq:win} \label{ln:win}
\STATE Compose preference dataset $\cD_{\text{RDPO}}$ using pairs $\{\xb_j, \yb_j^-, \yb_j^+\}_j$ according to~\eqref{eq:lose} \label{ln:lose}
\STATE Update $\btheta_{t+1}$ by minimizing loss $\cL(\btheta) = \cC \mathcal{L}_{\text{DPO}}(\pi_{\btheta_t}, \cD_{\text{DPO}}) +  (1 - \cC) \mathcal{L}_{\text{DPO}}(\pi_{\btheta_t}, \cD_{\text{RDPO}})$ \label{ln:dpo}
\ENDFOR
\ENSURE aligned policy model $\pi_{\btheta_T}$
\end{algorithmic}
\end{algorithm}

\subsection{Proposed Algorithm}
Equipped with the above two-stage optimization and the consistency-regularized self-rewarding, we are ready to present the implementation of \ours in Algorithm~\ref{alg:ours}. 
The whole framework of \ours is also illustrated in Figure~\ref{fig:main}.
The algorithm starts from the SFT training to obtain the first model parameter $\btheta_1$ in Line~\ref{ln:sft}. 
A similar approach is applied in~\citet{selfllm} for avoid calculating the $\cL_{\text{SFT}}$ in the future optimization steps. 
Then for each $\xb_j$ in the unlabeled prompt set $\cD_\text{U}$, $N$ response candidates $\{\yb_i\}_{i=1}^{N}$ are sampled in Line~\ref{ln:sample}. Then reward scores of these $N$ candidates can be calculated according to~\citet{dpo} by
\begin{align}
    r_{ij} = \beta [\log \pi_{\btheta_t}(\yb_{ij} | \xb_j) - \log \pi_{\btheta_0}(\yb_{ij} | \xb_j)] + \beta \log Z(\xb_j) ,
\end{align}
where we use the initial model parameter $\btheta_0$ as the reference policy $\pi_{\text{ref}}$.
Since $\beta \ge 0$ and $\log Z(\xb_j)$ is a constant across different response $\yb_i$ for the same input prompt $\xb_j$, we can drop these factors and calculate rewards in Line~\ref{ln:reward}. 
Specially, when $t = 1$, the rank $K_{ij}$ is calculated based on the reference policy $\btheta_0$ itself. Thus we instead use the likelihood $r_{ij} = \log \pi_{\btheta_0}(\yb_{ij} | \xb_j)$ as the reward for this edge case. 
The rank for these $N$ candidates are therefore obtained in Line~\ref{ln:rank1}, where $J_{ij}$ means response $\yb_{ij}$ is in the $J_{ij}$-th best in the preference list of $\xb_j$.  

\paragraph{Consistency-Regularized Self-Rewarding.} 
As discussed in~\eqref{eq:lambda}, a baseline model is required to measure the consistency.
In the self-rewarding scenario, it is infeasible to add an external reward model as the baseline model.
Fortunately, we can employ the model before last update $\btheta_{t-1}$ as the baseline model $\btheta_t'$ (i.e., last iteration's model) for evaluating the consistency of the model $\btheta$, thanks to chances provided by iterative training manner.
Such a procedure helps mitigate the training error introduced in $t-1$-th step before obtaining $\btheta_t$. Considering a pair of tied preference pair $\yb, \yb'$ both performing well, as demonstrated in Figure~\ref{fig:inconsistency-example}. $P[\yb \succ \yb' | \xb, \btheta_t]$ will be oscillating around $0.5$ when $t$ grows due to the random noise. Otherwise $P[\yb \succ \yb' | \xb, \btheta_t]$ might consistently converge to $0$ or $1$. Due to this oscillation, the consistency between $\btheta_{t-1}$ and $\btheta_t$ on this specific preference pair would be low, and the algorithm will learn less from this noninformative preference pair thus stabilize this oscillation.

Specifically, we calculate the rank of these $N$ candidates using $\btheta_{t-1}$ in Line~\ref{ln:rank2} and then use the Kendall's Tau coefficient~\citep{kendall1938new} denoted by
\begin{align}
\tau_j = \frac{2}{N(N-1)} \sum_{1 \le i < i' \le N} \left[ \ind\left[(J_{ij} - J_{i'j})(K_{ij} - K_{i'j}) > 0\right] - \ind\left[(J_{ij} - J_{i'j})(K_{ij} - K_{i'j}) < 0\right] \right].
\label{eq:kendall-tau}
\end{align}
Kendall's Tau coefficient is a widely used coefficient~\citep{mcleod2005kendall,abdi2007kendall} to measure the consistency of two ranking sequences. Basically, when two sequences perfectly aligns, $\tau_j = 1$ and when two sequence never aligns, $\tau_j = -1$. The following lemma draws the further connection between the Kendall's Tau and the regularization parameter $\lambda$ proposed in Section~\ref{sec:reg}.
\begin{lemma}\label{lm:kt}
Suppose the $N$ response candidate $\{\yb_{ij}\}_i$ is i.i.d. given the prompt $\xb_j$, then $$\EE [\tau_j] = 1 - 4\EE_{\yb, \yb' \sim \pi_{\btheta_t}(\cdot | \xb_j)} \ind[\yb \succ \yb' | \xb_j, \btheta_t]\ind[\yb \prec \yb' | \xb_j, \btheta_{t-1}] = 1 - 2\lambda,$$
where the expectation is taken over the randomness of sampling the $N$ candidate set. 
\end{lemma}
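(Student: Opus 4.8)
The plan is to compute $\EE[\tau_j]$ directly from the definition~\eqref{eq:kendall-tau} using linearity of expectation and the i.i.d. assumption on the candidates $\{\yb_{ij}\}_i$. First I would observe that each summand in~\eqref{eq:kendall-tau} is indexed by an unordered pair $\{i, i'\}$, and since the $\yb_{ij}$ are i.i.d. given $\xb_j$, the pair $(\yb_{ij}, \yb_{i'j})$ has the same joint law as a generic independent pair $(\yb, \yb') \sim \pi_{\btheta_t}(\cdot|\xb_j)$. Hence every one of the $\binom{N}{2}$ summands has the same expectation, and the prefactor $\frac{2}{N(N-1)} = 1/\binom{N}{2}$ collapses the sum to that common expectation:
\begin{align}
\EE[\tau_j] = \EE_{\yb,\yb'}\left[\ind[(J_\yb - J_{\yb'})(K_\yb - K_{\yb'}) > 0] - \ind[(J_\yb - J_{\yb'})(K_\yb - K_{\yb'}) < 0]\right], \notag
\end{align}
where $J$ denotes the rank induced by the reward $r_{\btheta_t}$ and $K$ the rank induced by $r_{\btheta_{t-1}}$.

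Next I would unpack the rank-comparison events. The rank $J$ is a decreasing function of the reward, so for a pair of distinct responses $(\yb, \yb')$, the sign of $J_\yb - J_{\yb'}$ is the opposite of the sign of $r_{\btheta_t}(\xb_j,\yb) - r_{\btheta_t}(\xb_j,\yb')$, and likewise for $K$ with $\btheta_{t-1}$. Therefore $(J_\yb - J_{\yb'})(K_\yb - K_{\yb'}) > 0$ exactly when $\btheta_t$ and $\btheta_{t-1}$ agree on the ordering of the pair, i.e. when $\ind[\yb \succ \yb'|\xb_j,\btheta_t] = \ind[\yb \succ \yb'|\xb_j,\btheta_{t-1}]$, and the product is negative exactly when they disagree. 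Assuming ties among the reward values have probability zero (continuity of the induced reward distribution), the two indicator events partition the probability space, so
\begin{align}
\EE[\tau_j] = P[\text{agree}] - P[\text{disagree}] = 1 - 2\,P[\text{disagree}]. \notag
\end{align}
Now $P[\text{disagree}]$ is the probability that $\btheta_t$ ranks $\yb$ above $\yb'$ while $\btheta_{t-1}$ ranks $\yb$ below $\yb'$, plus the symmetric term; by the exchangeability of $\yb$ and $\yb'$ (they are i.i.d.), these two terms are equal, giving $P[\text{disagree}] = 2\,\EE_{\yb,\yb'}\ind[\yb\succ\yb'|\xb_j,\btheta_t]\ind[\yb\prec\yb'|\xb_j,\btheta_{t-1}]$. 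Substituting yields the first claimed equality, and then plugging in the definition~\eqref{eq:lambda} of $\lambda = \lambda(\xb_j) = 2\EE_{\yb,\yb'}\ind[\yb\succ\yb'|\xb_j,\btheta_t]\ind[\yb\succ\yb'|\xb_j,\btheta_t']$ with $\btheta_t' = \btheta_{t-1}$, together with $\ind[\yb\succ\yb'] + \ind[\yb\prec\yb'] = 1$, converts the "disagree" expectation into $1 - \lambda$ form and produces $\EE[\tau_j] = 1 - 2\lambda$.

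The step I expect to be the main obstacle — or at least the one requiring the most care — is the bookkeeping around ties and the exchangeability argument that collapses the two disagreement terms into one. Concretely, one must argue that $P[r_{\btheta_t}(\xb_j,\yb) = r_{\btheta_t}(\xb_j,\yb')] = 0$ (otherwise the two indicators in~\eqref{eq:kendall-tau} do not sum to one and a $\frac{1}{2}$-type correction appears), which is where the i.i.d. / absolutely-continuous assumption is really used; and one must be careful that the quantity~\eqref{eq:lambda} defining $\lambda$ is itself symmetric in $\yb \leftrightarrow \yb'$ so that the identification with the doubled disagreement probability is exact rather than off by a factor of two. Everything else is linearity of expectation and the monotone relationship between reward and rank, which is routine.
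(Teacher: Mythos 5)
Your derivation of the first equality is correct and follows the same basic route as the paper's proof: reduce to a single generic pair via the i.i.d.\ assumption, translate rank comparisons into reward comparisons, and exploit exchangeability of $(\yb,\yb')$. Your version is in fact cleaner: you use the partition into agreement and disagreement events to get $\EE[\tau_j] = 1 - 2P[\text{disagree}]$ and then halve the disagreement probability by exchangeability, whereas the paper reaches the same expression $1 - 4\,\EE\,\ind[\yb\succ\yb'\,|\,\xb_j,\btheta_t]\ind[\yb\prec\yb'\,|\,\xb_j,\btheta_{t-1}]$ through a longer symmetrization (expanding the agree-minus-disagree expectation into four terms, merging them with $\ind[\succ]+\ind[\prec]=1$, producing two symmetric intermediate identities and averaging them). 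Both arguments share the implicit no-ties assumption that you correctly flag as the point needing care.

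The one step that does not go through as written is the final conversion to $1-2\lambda$. With the literal definition in~\eqref{eq:lambda}, $\lambda = 2\,\EE\,\ind[\yb\succ\yb'|\btheta_t]\ind[\yb\succ\yb'|\btheta_{t-1}]$, exchangeability gives $\EE\,\ind[\yb\succ\yb'|\btheta_t]=1/2$, hence the disagreement probability equals $1-\lambda$ and your own formula yields $\EE[\tau_j]=1-2(1-\lambda)=2\lambda-1$, not $1-2\lambda$. The identity $\EE[\tau_j]=1-2\lambda$ holds only if $\lambda$ is taken to be twice the \emph{disagreement} expectation, i.e.\ with the second indicator in~\eqref{eq:lambda} reversed to $\prec$; this is evidently the intended reading (it is the only one consistent with the lemma statement and with the paper's remark that $\lambda\to 0$ corresponds to high consistency), and the paper's own proof sidesteps the issue by stopping at the first equality and treating the second as definitional. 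You should either adopt that corrected definition explicitly or flag the sign discrepancy, rather than asserting that the substitution ``produces $1-2\lambda$'' --- as stated, that last sentence of your argument is false.
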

Given Lemma~\ref{lm:kt}, we can recover $\cC_{\lambda} \approx 1 - \lambda = (1 + \tau_j) / 2$ and we average all $\tau_j$ for all $\xb_j \in \cD_{\text{U}}$ in Line~\ref{ln:adapt}. Finally, in Line~\ref{ln:win}, we compose the preference dataset by selecting the best response $\yb^+_j = \yb_{i^+j}$ and the worst response $\yb^-_j = \yb_{i^-j}$ which is similar with~\citep{selfllm}.
\begin{align}
    \cD_{\text{DPO}} = \{(\xb_j, \yb_{i^+j}, \yb_{i^-j}) | \xb_j \in \cD_\text{U}, i^+ = \argmin_i J_{ij}, i^- = \argmax_i J_{ij}\}\label{eq:win}
\end{align}
Following Theorem~\ref{thm:rdpo}, we also prepare the reverse DPO dataset by switching the best response and the worst response by
\begin{align}
    \cD_{\text{RDPO}} = \{(\xb_j, \yb_{i^-j}, \yb_{i^+j}) | \xb_j \in \cD_\text{U}, i^+ = \argmin_i J_{ij}, i^- = \argmax_i J_{ij}\}\label{eq:lose}
\end{align}
and update $\btheta_{t+1}$ by minimizing the empirical loss of~\eqref{eq:softdpo} in Line~\ref{ln:dpo}.
The detailed proof of theorems and lemmas are provided in the Appendix~\ref{app:proof}.

% \clearpage

\section{Experiment}
\label{sec:exp}

\subsection{Experimental Setup}

\noindent \textbf{Data.}
In our experiments, we use Open Assistant dataset~\citep{kopf2024openassistant} and only reserve about 3.4K human-annotated examples as the seed SFT data $\cD_\text{S}$. 
To construct the unlabeled prompt dataset $\cD_\text{U}$, we mix prompts of $\cD_\text{S}$ with the train split of each downstream task (only reserve the prompts) including
\rev{(1) ARC-Easy/Challenge~\citep{arc_challenge}, 
% a dataset of genuine grade-school level, multiple-choice science questions-answering dataset;
(2) OpenBookQA~\citep{OpenBookQA2018}, 
% a dataset of elementary-level science questions designed to assess understanding of realistic scenario core facts;
(3) SIQA~\citep{siqa}, and
% a question-answering benchmark for testing models' abilities to reason about the social implications of everyday events and situations;
(4) GSM8K~\citep{gsm8k}.}
% a primary school level mathematical dataset in which the questions take between 2 and 8 steps to solve.
Finally, this process results in a total of 21K prompts in $\cD_\text{U}$, which we distribute equally across iterative self-rewarding trainings.

\noindent \textbf{Models.}
Due to limited computational resources, we mainly conduct experiments with two LLMs with about 7B parameters, including Llama-3~\citep{llama3} and Llama-2~\citep{llama2}, both of which are widely used.
Note that the proposed framework is designed for LLMs of any size, while validating our findings on other LLMs will be our future work.

\noindent \textbf{Baseline Methods.}
We mainly compare our method with SRLM~\citep{selfllm} which uses the same LLM to serve as both the policy and reward model to generate preference data for iterative training.
Additionally, we introduce a variant of RL with AI feedback~\citep{onlineAI}, referred to as ``Oracle''. In this variant, the reward model in SRLM is replaced with an external reward model to demonstrate the upper bound performance of SRLM.
Specifically, we use InternLM2~\citep{cai2024internlm2technicalreport}, a specialized trained reward model, to provide the reward scores for the generated responses. We further enhance Oracle's rewarding by leveraging the labels from downstream tasks to improve the rewarding accuracy.
\rev{To compare the regularization method, we introduce ``SRLM + KL'' variant which adds a simple regularization term $\lambda [\frac{\pi_\theta(y|x) }{\pi_\text{ref}(y|x)} - \frac{\pi_\theta(y'|x)}{\pi_\text{ref}(y'|x)}]^2$ to the original DPO loss, where $\lambda \in [0.1, 0.2, \cdots, 1.0]$ is a hyperparameter.
In order to validate the contribution of our automatically determined consistency rate via ranking correlation, we also introduce a variant ``\ours w/o RC'' which replaces the ranking correlation (RC) with a manually set value as the consistency rate, where this value is searched in the range of $[0.1, 0.3, \cdots, 0.9]$.}

\noindent \textbf{Implementation Details.}
In our experiments, we fine-tune the initial model (M0) on the seed SFT data for $3$ epochs with a learning rate of $1e-6$, resulting in model M1.
Following SRLM approach, we then iteratively fine-tune the model using the preference learning objective two additional iterations, producing models M2 and M3.
In the preference training of each iteration, we set $\beta=0.1$ of DPO, and fine-tune the model for $1$ epoch with a learning rate of $1e-6$.
All training processes use AdamW optimizer~\citep{loshchilov2018decoupled} with a warmup ratio of $0.1$. 
For the response sampling stage of all SRLM methods, we use a decoding temperature of $0.8$ and generate $N=5$ responses per prompt.
For evaluating downstream tasks, we use greedy decoding to generate answers.
\subsection{Main Results}

\begin{table}[t]
      \centering
      \small
      \caption{\rev{Main results of each method on test sets of downstream tasks. The exact match accuracies are reported. The ``\redup'' and ``\bluedown'' indicate the performance improvement and degradation compared to the method's last iteration (e.g., M1 $\rightarrow$ M2 and M2 $\rightarrow$ M3), respectively. The best performance among methods using \textbf{self-rewarding} is highlighted in \textbf{bold}. 
      }
      }
  
        \resizebox{0.85\textwidth}{!}{

\begin{tabular}{l|lc|l|ccccc|c}
      \toprule
      Model & Method & Reward & Iteration & Arc-Easy & Arc-Challenge & OpenBookQA & SIQA  & GSM8K & \multicolumn{1}{l}{Average} \\
      \midrule
      % GPT-4o & CoT   & -     & -     & 94.57 & 94.71 & 96.60 & 79.63 & 92.27 & 91.56 \\
      % \midrule\midrule
      \multirow{12}[14]{*}{Llama-3} & Initial & -     & M0    & 86.29 & 80.37 & 86.00 & 68.58 & 78.01 & 79.85 \\
      \cmidrule{2-10}      & SFT   & -     & M1    & 86.78 & 80.14 & 86.40 & 69.50 & 78.39 & 80.24 \\
      \cmidrule{2-10}      & \multirow{2}[2]{*}{Oracle} & \multirow{2}[2]{*}{External} & M2    & 89.60 \redup & 82.17 \redup & 90.00 \redup & 72.88 \redup & 80.82 \redup & 83.09 \redup \\
            &       &       & M3    & 89.31 \bluedown & 81.31 \bluedown & 90.20 \redup & 73.75 \redup & 76.04 \bluedown & 82.12 \bluedown \\
      \cmidrule{2-10}\morecmidrules\cmidrule{2-10}      & \multirow{2}[2]{*}{SRLM} & \multirow{8}[8]{*}{Self} & M2    & 87.79 \redup & 80.38 \redup & 87.80 \redup & 70.95 \redup & 78.01 \bluedown & 80.99 \redup \\
            &       &       & M3    & 87.17 \bluedown & 81.23 \redup & 87.30 \bluedown & 70.37 \bluedown & 77.48 \bluedown & 80.71 \bluedown \\
      \cmidrule{4-10}      & \multirow{2}[2]{*}{SRLM + KL} &       & M2    & 87.92 \redup & 79.78 \bluedown & 86.60 \redup & 71.49 \redup & 79.38 \redup & 81.03 \redup \\
            &       &       & M3    & 88.38 \redup & 80.97 \redup & 88.20 \redup & 71.19 \bluedown & 80.29 \redup & 81.81 \redup \\
      \cmidrule{4-10}      & \multirow{2}[2]{*}{\ours w/o RC} &       & M2    & 88.26 \redup & 79.86 \bluedown & 86.80 \redup & 69.55 \redup & 79.98 \redup & 80.89 \redup \\
            &       &       & M3    & 88.09 \bluedown & 80.55 \redup & 87.20 \redup & 71.39 \redup & 79.23 \bluedown & 81.29 \redup \\
      \cmidrule{4-10}      & \multirow{2}[2]{*}{\ours} &       & M2    & 88.89 \redup & 80.89 \redup & 88.00 \redup & 69.79 \redup & 81.04 \redup & 81.72 \redup \\
            &       &       & M3    & \textbf{89.52 \redup} & \textbf{83.36 \redup} & \textbf{90.20 \redup} & \textbf{72.06 \redup} & \textbf{81.73 \redup} & \textbf{83.37 \redup} \\
      \midrule\midrule
      \multirow{12}[14]{*}{Llama-2} & Initial & -     & M0    & 61.07 & 48.98 & 62.20 & 50.36 & 23.65 & 49.25 \\
      \cmidrule{2-10}      & SFT   & -     & M1    & 60.44 & 48.46 & 63.20 & 50.77 & 23.88 & 49.35 \\
      \cmidrule{2-10}      & \multirow{2}[2]{*}{Oracle} & \multirow{2}[2]{*}{External} & M2    & 70.20 \redup & 55.03 \redup & 75.40 \redup & 63.66 \redup & 30.02 \redup & 58.86 \redup \\
            &       &       & M3    & 71.72 \redup & 55.80 \redup & 77.20 \redup & 62.44 \bluedown & 29.57 \bluedown & 59.35 \redup \\
      \cmidrule{2-10}\morecmidrules\cmidrule{2-10}      & \multirow{2}[2]{*}{SRLM} & \multirow{8}[8]{*}{Self} & M2    & 58.67 \bluedown & 46.67 \bluedown & 59.80 \redup & 49.69 \bluedown & 25.17 \redup & 48.00 \bluedown \\
            &       &       & M3    & 46.55 \bluedown & 34.47 \bluedown & 49.20 \bluedown & 48.06 \bluedown & 22.14 \bluedown & 40.08 \bluedown \\
      \cmidrule{4-10}      & \multirow{2}[2]{*}{SRLM + KL} &       & M2    & 57.45 \bluedown & 46.93 \bluedown & 62.40 \bluedown & 51.02 \redup & 24.18 \redup & 48.40 \bluedown \\
            &       &       & M3    & 57.20 \bluedown & 46.42 \bluedown & 61.60 \bluedown & 49.08 \bluedown & 26.08 \redup & 48.08 \bluedown \\
      \cmidrule{4-10}      & \multirow{2}[2]{*}{\ours w/o RC} &       & M2    & 58.25 \bluedown & 45.56 \bluedown & 61.60 \bluedown & 51.13 \redup & 24.56 \redup & 48.22 \bluedown \\
            &       &       & M3    & 60.06 \redup & \textbf{49.23 \redup} & \textbf{65.40 \redup} & 49.44 \bluedown & 25.62 \redup & 49.95 \redup \\
      \cmidrule{4-10}      & \multirow{2}[2]{*}{\ours} &       & M2    & 58.97 \bluedown & 47.53 \bluedown & 62.80 \bluedown & 50.43 \bluedown & 24.41 \redup & 48.83 \bluedown \\
            &       &       & M3    & \textbf{62.08 \redup} & 48.81 \redup & 64.60 \redup & \textbf{51.22 \redup} & \textbf{25.85 \redup} & \textbf{50.51 \redup} \\
      \bottomrule
      \end{tabular}%

        }
      \label{tab:main-res}%
      \vspace{-1em}
    \end{table}%

\rev{The main results are shown in Table~\ref{tab:main-res}. From these results, we have the following findings:}
\begin{wrapfigure}{r}{0.42\textwidth}
    \captionsetup{skip=4pt}
    \centering
    \includegraphics[width=0.42\textwidth]{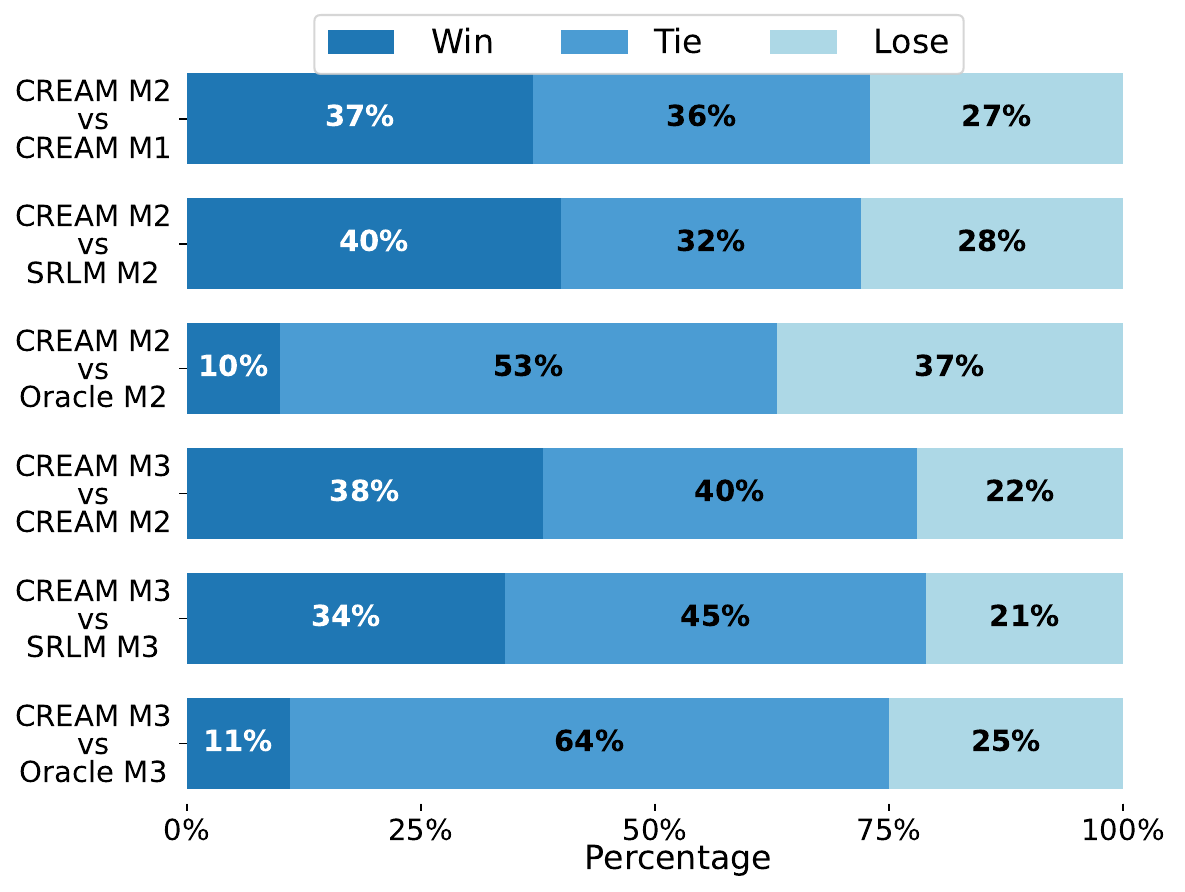}
    \caption{Arena of \ours v.s. SRLM and Oracle, judged by GPT-4o.}
    \label{fig:win-rate}
\end{wrapfigure}
(1) The Standard SRLM fails to achieve satisfactory performance, particularly with Llama-2 which has weaker foundation performance, which indicates its limitations for 7B-level LLMs.
(2) Compared to SRLM, \ours achieves a significant improvement, showing the advantage of introducing the proposed regularization.
(3) SRLM equipped with an oracle reward model (Oracle) achieves the best performance overall. This highlights a critical challenge for all methods with unreliable rewarding. Notably, for Llama-3, \ours even outperforms Oracle except on SIQA dataset. This superiority underlines the success of the proposed method in mitigating the rewarding bias issue.
\rev{(4) The consistent improvements of \ours across iterations show the effectiveness of the proposed regularization method in mitigating the rewarding bias issue. Results of later iterations (up to M6) in Appendix~\ref{sec:app-456iteration} also demonstrates that adding such regularization can help prevent the model from degeneration in the long term.
(5) Though \rev{SRLM + KL} method improves performance of SRLM via regularization, its overall performance across iterations is still inferior to \ours, indicating \ours performs better than directly restricting the KL divergence.
(6) \ours consistently outperforms ``\ours w/o RC'' which manually set the consistency rate. And using the dynamically calculated consistency rate can eliminate the costs of hyperparameter search.}

\noindent \textbf{Alignment Arena.} To more directly compare the alignment performance of \ours, we further show the Arena win-rate of our method in Figure~\ref{fig:win-rate}. We can find that \ours can beat baseline methods with the same iteration, which confirms the superiority of the proposed regularized self-rewarding. Also, the win and tie rates of \ours increase with the iteration against stronger model, i.e., Oracle, indicating the consistent performance improvements across iterations.

\vspace{-0.5em}
\subsection{Analysis}
\vspace{-0.3em}

\subsubsection{Analysis of Rewarding}
\vspace{-0.3em}

\noindent \textbf{Rewarding Consistency.} We first examine the consistency of rewarding of different methods using their corresponding models from the last iteration in Table~\ref{tab:rank-consistency}. 
Here, we use the proposed Consistency Rate $\cC$, Kendall correlation coefficient $\tau$, Spearman correlation coefficient, and TopOrder metrics to measure the consistency, where this metric evaluates whether the final paired preference data remains the same, calculated by: $
    \text{TopOrder}_{j} = \ind\left[\argmin J_j = \argmin K_j \right] \cdot \ind\left[\argmax J_j = \argmax K_j \right],
    \notag$
where $J_j$ and $K_j$ are the rankings provided by current model and the last iteration's model, respectively. This metric assesses whether both the least preferred and most preferred responses are consistently ranked across iterations. The results confirm that SRLMs exhibit a rewarding consistency issue, indicating that the generated preference data contains noise. In contrast, \ours keeps the ranking consistency across iterations thanks to the regularized training.

\begin{wraptable}{r}{0.5\textwidth}

    \centering
    \caption{Ranking consistency of \ours and SRLM across iterations using Llama-3.}
    \resizebox{0.5\textwidth}{!}{

    \begin{tabular}{cl|cccc}
        \toprule
        \multicolumn{1}{l}{Iterations} & Method & Consistency $\cC$↑ & Kendall $\tau$ ↑ & Spearman ↑ & TopOrder ↑ \\
        \midrule
        \multirow{2}{*}{M2 vs M1} & SRLM  & 0.39 ± 0.21 & -0.22 ± 0.41 & 0.36 ± 0.24 & 0.03 ± 0.18 \\
              & \ours & \textbf{0.73 ± 0.18} & \textbf{0.46 ± 0.35} & \textbf{0.77 ± 0.19} & \textbf{0.19 ± 0.39} \\
        \midrule
        \multirow{2}{*}{M3 vs M2} & SRLM  & 0.46 ± 0.19 & -0.08 ± 0.38 & 0.50 ± 0.22 & 0.12 ± 0.33 \\
              & \ours & \textbf{0.92 ± 0.09} & \textbf{0.84 ± 0.19} & \textbf{0.95 ± 0.07} & \textbf{0.59 ± 0.49} \\
        \bottomrule
        \end{tabular}%

    }
    \label{tab:rank-consistency}
    \end{wraptable}
\noindent \textbf{Prompt Rewarding v.s. DPO Rewarding.}
As aforementioned, 7B level LLMs struggle with generating accurate rewards when using LLM-as-a-Judge prompting due to their limited capability.
Both Figure~\ref{fig:prompt-rewarding} and Figure~\ref{fig:reward-acc} clearly show that prompt rewarding is not effective for SRLM with small LLMs, as the performance starts to decrease at the first iteration (M1 $\rightarrow$ M2) when trained on the self-rewarded preference data.
In contrast, the adopted DPO rewarding method can be more suitable, since it is intrinsically aligned with the model's learning objective.

\noindent \textbf{Ranking Accuracy.} The ranking accuracy is crucial for self-rewarding methods, as it directly affects the quality of the self-generated preference data for training.
Figure~\ref{fig:reward-acc} provides an intuitive comparison of the rewarding performance. 
The results include the ranking accuracy on self-generated preference data and the RewardBench~\citep{reward_bench}, both of which is formulated to predict the preferred one between two responses.
We use the preference data obtained by self-rewarding with ground truth ranking labels, for testing the model's in-domain ranking performance.
RewardBench is used to assess models' generalization beyond the training domain.
\ours consistently achieves higher ranking accuracy than baseline methods, which promises more reliable data for training.
Even though the overall ranking accuracy ($\le 70\%$) is not satisfying, the introduced consistency regularization can help mitigate the negative impacts of potentially noisy ranking annotations.

\begin{minipage}{0.52\textwidth}
    \centering
        \includegraphics[width=\linewidth]{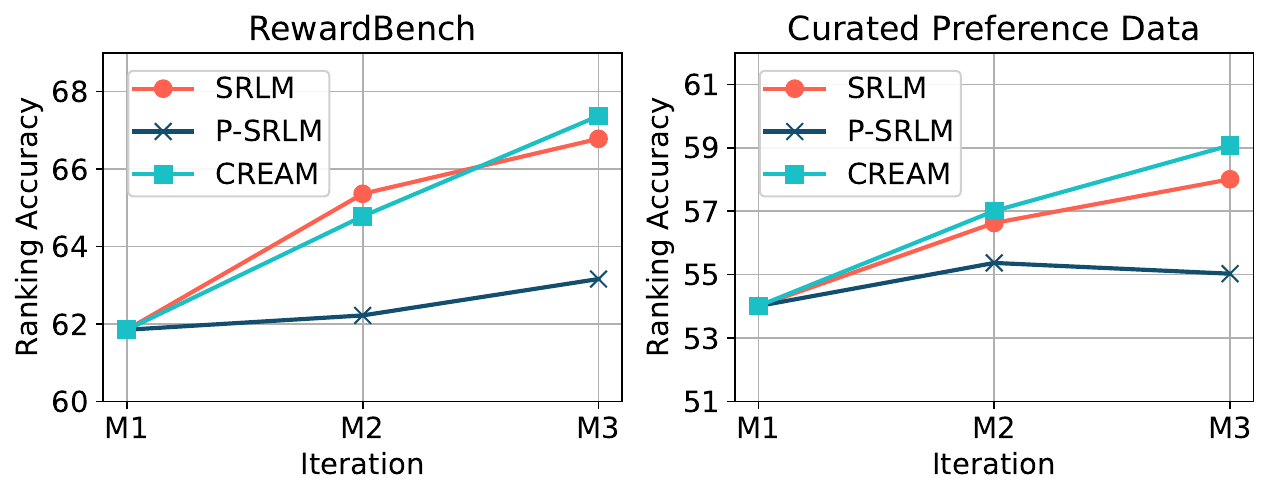} 
    \vspace{-1em}
    \captionof{figure}{Pairwise ranking accuracy on RewardBench and a curated Preference Data. P-SRLM is SRLM with prompt rewarding (LLM-as-a-Judge).}
    \label{fig:reward-acc}
\end{minipage}
\hfill
\begin{minipage}{0.43\textwidth}
\centering
        \includegraphics[width=\linewidth]{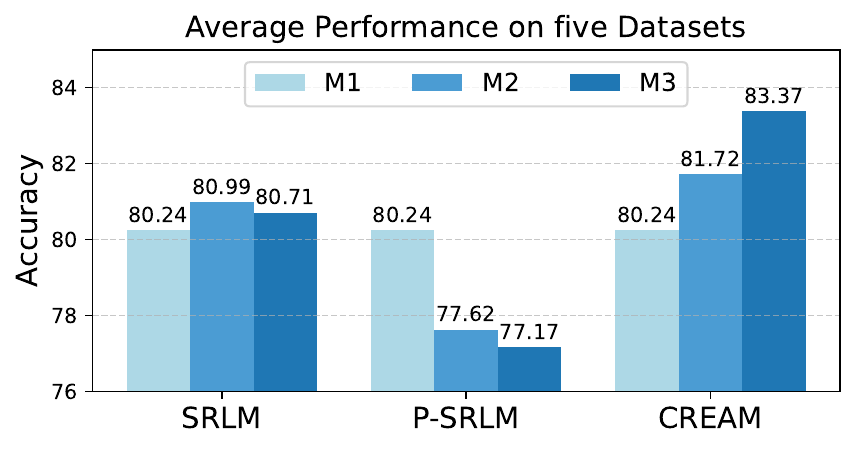} 
    \vspace{-1.7em}
    \captionof{figure}{\rev{Average performance of SRLM, P-SRLM, and \ours. The detailed performance is shown in Table~\ref{tab:main-res} and Table~\ref{tab:prompt-rewarding}.}}
    \label{fig:prompt-rewarding}
        
\end{minipage}

\subsubsection{Reliability of Self-Consistency}
\begin{wraptable}{r}{0.55\textwidth} 
        \centering
        \caption{Comparison of \ours using oracle reward model and last iteration's model. \rev{BRM denotes the choice of baseline reward model.}}
        \resizebox{0.55\textwidth}{!}{

    \begin{tabular}{ll|ccccc}
        \toprule
        Method & BRM    & \multicolumn{1}{c}{Arc-E} & \multicolumn{1}{c}{Arc-C} & \multicolumn{1}{c}{OBQA} & \multicolumn{1}{c}{SIQA} & GSM8K \\
        \midrule
        Llama-3 M1 & -     & 86.78 & 80.14 & 86.40 & 69.50 & 78.39 \\
        \ours M2 & M0    & \textbf{88.89} & 80.89 & \textbf{88.00} & 69.79 & \textbf{81.04} \\
        \ours M2 & Oracle & 88.51 & \textbf{81.06} & 86.20 & \textbf{72.21} & 79.91 \\
        \midrule
        Llama-2 M1 & -     & 60.44 & 48.46 & 63.20 & 50.77 & 23.88 \\
        \ours M2 & M0    & 58.97 & 47.53 & 62.80 & 50.43 & \textbf{24.41} \\
        \ours M2 & Oracle & \textbf{62.42} & \textbf{48.72} & \textbf{66.00} & \textbf{51.13} & 22.52 \\
        \bottomrule
        \end{tabular}%

        }
        \label{tab:ours_with_oracle}
        % \vspace{-1em}
\end{wraptable}
The most straightforward way to enhance the rewarding and ranking accuracy is by incorporating external reward models, such as the SRLM variant ``Oracle'' used in our experiments.
The theoretical analysis in~\eqref{eq:lambda} suggests that we can mitigate the rewarding bias issue by calculating the ranking consistency between current model and \rev{other available baseline reward models (BRMs)}.
However, it is not always feasible to have access to such external reward models in practice, such as the self-rewarding scenario.
Thus, we instead propose to use the last iteration's model as the \rev{BRM} to measure the consistency of rewarding.

\noindent \textbf{\rev{Choice of Baseline Reward Model.}}
\rev{To measure the impacts on \ours of using different BRMs, we fine-tune the M1 model using \ours with two different BRMs}: the rewarding function of Oracle and the model from the last iteration (M0).
As shown in Table~\ref{tab:ours_with_oracle}, using a strong reward model as the \rev{BRM} can bring better regularization effect, especially for Llama-2. However, we find that the last iteration's model also provides a reasonably reliable consistency signal for Llama-3.
We attribute this to Llama-3's inherently stronger foundational alignment and better internal consistency, which allows it to effectively utilize itself without needing an external reward model.

\subsubsection{Consistency Measurement}

\begin{wraptable}{r}{0.55\textwidth} 
    \centering
    \caption{\rev{Performance of \ours using different consistency measurements with Llama-3.}}
    \resizebox{0.55\textwidth}{!}{

\begin{tabular}{ll|ccccc}
    \toprule
    Iteration & Method & Arc-E & Arc-C & OBQA  & SIQA  & GSM8K \\
    \midrule
    M1    & -     & 86.78 & 80.14 & 86.40 & 69.50 & 78.39 \\
    \midrule
    \multirow{3}[2]{*}{M2} & Spearman  & 86.95 & \textbf{82.00} & 85.40 & 70.05 & 78.77 \\
          & TopOrder & 87.25 & 80.12 & 86.88 & \textbf{70.83} & 79.75 \\
          & \textbf{Kendall (Ours)} & \textbf{88.89} & 80.89 & \textbf{88.00} & 69.79 & \textbf{81.04} \\
    \midrule
    \multirow{3}[2]{*}{M3} & Spearman & 88.76 & 81.83 & 90.00 & 70.98 & 79.15 \\
          & TopOrder & 88.51 & 80.37 & 87.40 & 71.03 & 79.76 \\
          & \textbf{Kendall (Ours)} & \textbf{89.52} & \textbf{83.36} & \textbf{90.20} & \textbf{72.06} & \textbf{81.73} \\
    \bottomrule
    \end{tabular}%

    }
    \label{tab:kendall-vs-other}
\end{wraptable}
Besides the adopted Kendall $\tau$ coefficient, other metrics can also be used to measure the consistency between two preference ranking lists, such as Spearman coefficient~\citep{spearman1904proof} and the aforementioned TopOrder method. 
We conduct a comparison experiments of using different consistency measurement methods in Table~\ref{tab:kendall-vs-other}.
We can observe that: 
(1) All these measurements are effective with \ours, indicating the generalization and applicability of our regularized training approach.
(2) Kendall correlation coefficient generally yields higher scores across various datasets compared to Spearman and TopOrder methods.
(3) The differences in performance highlight the sensitivity of these consistency measurements. Specifically, the Spearman coefficient appears slightly less robust than Kendall's $\tau$, as analyzed in~\citet{croux2010influence}. Meanwhile, TopOrder focuses only on top-1 and bottom-1 rankings, limiting its evaluation scope.

\section{Conclusion}
In this paper, we first formulate a generalized iterative preference fine-tuning framework for self-rewarding language models (SRLMs), which can also be applied to other iterative preference training methods.
Then, we highlight the rewarding bias that arises from overconfident preference labeling, which is particularly problematic for small LLMs, such as those with 7B parameters. This rewarding bias results in the accumulation of noisy and unreliable preference data, harming the preference training and hindering alignment performance of LLMs.
To address this issue, we proposed the \textbf{C}onsistency \textbf{R}egularized s\textbf{E}lf-Rewarding l\textbf{A}nguage \textbf{M}odel (\ours), which leverages the consistency of rewarding across different iterations as a regularization signal. This approach allows the model to learn more selectively, emphasizing reliable preference data and avoiding overconfidence in preference labeling.
Extensive experimental results and analysis on various natural language benchmark datasets demonstrate the effectiveness of the proposed method in mitigating the rewarding bias issue and improving the alignment performance of SRLMs.
We believe that these findings can provide valuable insights for future research on self-improvement methods of LLM alignment.

\bibliography{iclr2025_conference}
\bibliographystyle{iclr2025_conference}

\appendix
\newpage

\section{Limitations}
In this section, we discuss the limitations of our work and propose potential solutions for future research. First, our method requires full fine-tuning of the model over multiple iterations, which is both time-consuming and computationally intensive. For future research, we plan to explore Parameter-Efficient Fine-Tuning (PEFT) methods to reduce training costs. Second, our approach primarily focuses on small (7B level) language models, which may limit its applicability. However, the proposed method is designed to be general and can be applied to models of any size. Third, it is worth investigating more complex consistency-regularized self-rewarding scenarios, such as how to assign different weights to various models during the self-rewarding process based on their importance, and enable these models to collaborate to improve the rewarding accuracy.

\section{Proof of the theorems and lemmas} \label{app:proof}
\subsection{Proof of Theorem~\ref{thm:convergence}}
\begin{proof}[Proof of Theorem~\ref{thm:convergence}]
    We denote iteration of the two-step algorithm as $t$. The algorithm starts from $(\btheta_t, z_t)$, and obtains $z_{t+1} = z_{\btheta_t}$ according to~\eqref{eq:z} in the preference-labeling step and then obtains $\btheta_{t+1}$ through the learning step. Since $z_{t+1} = \argmin_z \cL_{\text{DPO}}(\btheta_t; \yb, \yb', \xb, z)$ for any $\yb, \yb', \xb$ according to~\eqref{eq:z}, we have that
    \begin{align}
        \cL(\btheta_t, z_{t+1}) \le \cL(\btheta_t, z_t). \label{eq:step-1}
    \end{align}
    And the learning step suggests that $\btheta_{t+1} = \argmin_{\btheta} \cL(\btheta, z_{t+1})$, yielding that
    \begin{align}
        \cL(\btheta_{t+1}, z_{t+1}) \le \cL(\btheta_t, z_{t+1}). \label{eq:step-2} 
    \end{align}
    Connecting~\eqref{eq:step-1} with~\eqref{eq:step-2} yields that the loss function $\cL(\btheta, z)$ is monotonically decreasing, i.e. 
    \begin{align}
        \cdots \le \cL(\btheta_{t+1}, z_{t+1}) \le \cL(\btheta_t, z_{t+1}) \le \cL(\btheta_t, z_t) \le \cdots.
    \end{align}
    Since $\cL(\btheta, z)$ is upper bounded by $0$, it suggests that the sequence of $\cL(\btheta_t, z_t)$ will converge w.r.t. the growth of $t$. 
\end{proof}
\subsection{Proof of Lemma~\ref{lm:KL}}
\begin{proof}[Proof of Lemma~\ref{lm:KL}]
We start by expanding $\EE_{\yb, \yb' \sim \pi_{\btheta_t}(\cdot | \xb)} \cL_{\text{reg}}(\btheta; \yb, \yb', \xb)$ as 
\begin{align}
    \EE_{\yb, \yb' \sim \pi_{\btheta_t}(\cdot | \xb)} \cL_{\text{reg}}(\btheta; \yb, \yb', \xb) &= \EE_{\yb, \yb' \sim \pi_{\btheta_t}(\cdot | \xb)}\left[\log P_{\btheta}(z = 1) + \log P_{\btheta}(z = 0)\right] \notag \\
    &= \EE_{\yb, \yb' \sim \pi_{\btheta_t}(\cdot | \xb), \yb \prec \yb'}\left[\log P_{\btheta}(z = 1) + \log P_{\btheta}(z = 0)\right] \notag \\
    &\quad + \EE_{\yb, \yb' \sim \pi_{\btheta_t}(\cdot | \xb), \yb \succeq \yb'}\left[\log P_{\btheta}(z = 1) + \log P_{\btheta}(z = 0)\right] \notag \\
    &= \EE_{\yb, \yb' \sim \pi_{\btheta_t}(\cdot | \xb)}P_{\btheta_t}(z = 0)\left[\log P_{\btheta}(z = 1) + \log P_{\btheta}(z = 0)\right] \notag \\
    &\quad + \EE_{\yb, \yb' \sim \pi_{\btheta_t}(\cdot | \xb)}P_{\btheta_t}(z = 1)\left[\log P_{\btheta}(z = 1) + \log P_{\btheta}(z = 0)\right] \label{eq:2},
\end{align}
where the second equation decompose the expectation $\EE_{\yb, \yb' \sim \pi_{\btheta_t}(\cdot | \xb)}$ into two expectation $\EE_{\yb, \yb' \sim \pi_{\btheta_t}(\cdot | \xb)}$ + $\EE_{\yb, \yb' \sim \pi_{\btheta_t}(\cdot | \xb)}$, the third equation extract the event $\yb \ge \yb'$ as distribution $P_{\btheta_t}(z)$. Then~\eqref{eq:2} can be further written by 
\begin{align}
\EE_{\yb, \yb' \sim \pi_{\btheta_t}(\cdot | \xb)} \cL_{\text{reg}}(\btheta; \yb, \yb', \xb) &= P_{\btheta_t}(z = 1)\left[\log P_{\btheta}(z = 1) + \log P_{\btheta}(z = 0)\right] \notag \\
&\quad+ P_{\btheta_t}(z = 0)\left[\log P_{\btheta}(z = 1) + \log P_{\btheta}(z = 0)\right], \label{eq:3}
\end{align}
since both $\yb, \yb'$ are generated from $\pi_{\btheta_t}(\cdot | \xb)$, $P_{\btheta_t}(z = 0) = P_{\btheta_t}(z = 1) = 0.5$. Thus~\eqref{eq:3} becomes
\begin{align}
    \EE_{\yb, \yb' \sim \pi_{\btheta_t}(\cdot | \xb)} \cL_{\text{reg}}(\btheta; \yb, \yb', \xb) = 2\mathrm{KL}(P_{\btheta}(z) \parallel P_{\btheta_t}(z)) = 2\mathrm{KL}(P_{\btheta}(z) \parallel u(z)),
\end{align}
where $u(z)$ is the uniform binary distribution with $u(z = 0) = u(z = 1) = 0.5$. 
\end{proof}
\subsection{Proof Theorem~\ref{thm:rdpo}}
\begin{proof}[Proof of Theorem~\ref{thm:rdpo}]
We start by writing down each components in $\cL(\btheta, z)$ defined in~\eqref{eq:slrm} by
{\small
\begin{align}
\cL(\btheta, z) &= \cL_{\text{SFT}}(\btheta; \cD_{\text{S}}) + \EE_{\xb \sim \cD'; \yb, \yb' \sim \pi_{\btheta_t}(\cdot | \xb)} [\cL_{\text{DPO}}(\btheta; \yb, \yb', \xb, z) + \lambda \cL_{\text{Reg}}(\btheta; \yb, \yb', \xb)] \notag \\
&= \cL_{\text{SFT}}(\btheta; \cD_{\text{S}}) \notag \\
&\quad + \EE_{\xb \sim \cD'; \yb, \yb' \sim \pi_{\btheta_t}(\cdot | \xb)}\left[- z(\yb, \yb', \xb) \log \sigma\left(\log \left(\frac{\pi_{\btheta}(\yb | \xb)}{\pi_{\text{ref}}(\yb | \xb)}\right) - \log \left(\frac{\pi_{\btheta}(\yb' | \xb)}{\pi_{\text{ref}}(\yb' | \xb)}\right)\right)\notag \right. \\
&\qquad\qquad\qquad\qquad\qquad\left.  - (1 - z(\yb, \yb', \xb)) \log \sigma\left(\log \left(\frac{\pi_{\btheta}(\yb' | \xb)}{\pi_{\text{ref}}(\yb' | \xb)}\right) - \log \left(\frac{\pi_{\btheta}(\yb | \xb)}{\pi_{\text{ref}}(\yb | \xb)}\right)\right) \right.\notag \\
&\qquad\qquad\qquad\qquad\qquad\left.- \lambda \log \sigma\left(\log \left(\frac{\pi_{\btheta}(\yb | \xb)}{\pi_{\text{ref}}(\yb | \xb)}\right) - \log \left(\frac{\pi_{\btheta}(\yb' | \xb)}{\pi_{\text{ref}}(\yb' | \xb)}\right)\right)\notag \right.\\
&\qquad\qquad\qquad\qquad\qquad\left.  - \lambda \log \sigma\left(\log \left(\frac{\pi_{\btheta}(\yb' | \xb)}{\pi_{\text{ref}}(\yb' | \xb)}\right) - \log \left(\frac{\pi_{\btheta}(\yb | \xb)}{\pi_{\text{ref}}(\yb | \xb)}\right)\right) \right]\notag \\
&= \cL_{\text{SFT}}(\btheta; \cD_{\text{S}}) \notag \\
&\quad + \EE_{\xb \sim \cD'; \yb, \yb' \sim \pi_{\btheta_t}(\cdot | \xb)}\left[- (\lambda + z(\yb, \yb', \xb)) \log \sigma\left(\log \left(\frac{\pi_{\btheta}(\yb | \xb)}{\pi_{\text{ref}}(\yb | \xb)}\right) - \log \left(\frac{\pi_{\btheta}(\yb' | \xb)}{\pi_{\text{ref}}(\yb' | \xb)}\right)\right)\notag \right. \\
&\qquad\qquad\qquad\qquad\qquad\left.  - (1 + \lambda - z(\yb, \yb', \xb)) \log \sigma\left(\log \left(\frac{\pi_{\btheta}(\yb' | \xb)}{\pi_{\text{ref}}(\yb' | \xb)}\right) - \log \left(\frac{\pi_{\btheta}(\yb | \xb)}{\pi_{\text{ref}}(\yb | \xb)}\right)\right) \right]\notag
\end{align}}
where the third equation absorbs the regularization into the DPO loss. Noticing that $\lambda + z(\yb, \yb', \xb) + (1 + \lambda - z(\yb, \yb', \xb)) = 1 + 2\lambda$, by dividing $(1 + 2\lambda)$ we have 
{\small
\begin{align}
\frac{\cL(\btheta, z)}{1 + 2\lambda} &= \frac{\cL_{\text{SFT}}(\btheta; \cD_\text{SFT})}{1 + 2\lambda} \notag \\
&\quad + \EE_{\xb \sim \cD'; \yb, \yb' \sim \pi_{\btheta_t}(\cdot | \xb)}\left[- \frac{\lambda + z(\yb, \yb', \xb)}{1 + 2\lambda} \log \sigma\left(\log \left(\frac{\pi_{\btheta}(\yb | \xb)}{\pi_{\text{ref}}(\yb | \xb)}\right) - \log \left(\frac{\pi_{\btheta}(\yb' | \xb)}{\pi_{\text{ref}}(\yb' | \xb)}\right)\right)\notag \right. \\
&\qquad\qquad\qquad\qquad\qquad\left.  - \frac{1 + \lambda - z(\yb, \yb', \xb)}{1 + 2\lambda} \log \sigma\left(\log \left(\frac{\pi_{\btheta}(\yb' | \xb)}{\pi_{\text{ref}}(\yb' | \xb)}\right) - \log \left(\frac{\pi_{\btheta}(\yb | \xb)}{\pi_{\text{ref}}(\yb | \xb)}\right)\right) \right]\notag.
\end{align}}
When $z (\yb, \yb' \xb) = 1$, $(\lambda + z(\yb, \yb', \xb)) / (1 + 2\lambda) = 1 - \lambda / (1 +2 \lambda)$ and $(1 + \lambda - z(\yb, \yb', \xb))(1 + 2\lambda) = \lambda / (1 + 2\lambda)$. Therefore, letting $\cC_\lambda = \lambda / (1 + 2\lambda)$ yields that 
\begin{align}
    \frac{\cL(\btheta, z)}{1 + 2\lambda} &= \frac{\cL_{\text{SFT}}(\btheta; \cD_\text{SFT})}{1 + 2\lambda} \notag \\
    &\quad + \EE_{\xb \sim \cD'; \yb, \yb' \sim \pi_{\btheta_t}(\cdot | \xb)}[(1 - \cC_\lambda)\cL_{\text{DPO}}(\btheta; \yb, \yb', \xb, z) + \cC_\lambda \cL_{\text{DPO}}(\btheta; \yb, \yb', \xb, 1 - z)], \notag 
\end{align}
which completes the proof since minimizing $\cL(\btheta, z) / (1 + \lambda)$ is equivalent with minimizing $\cL(\btheta, z)$ itself. 
\end{proof}
\subsection{Proof of Lemma~\ref{lm:kt}}
\begin{proof}[Proof of Lemma~\ref{lm:kt}]
To begin with, according to the ranking of $J_{ij}$, the sufficient and necessary condition for $J_{ij} - J_{i'j} > 0$ is that $r_{ij} < r_{i'j}$. Similarly, the sufficient and necessary condition for $K_{ij} > K_{i'j}$ is that $r'_{ij} < r'_{i'j}$. As a result, the indicator becomes 
\begin{align}
\ind[(J_{ij} - J_{i'j})(K_{ij} - K_{i'j}) > 0]  &= \ind[(r_{ij} - r_{i'j})(r'_{ij} - r'_{i'j}) > 0] \label{eq:theta1} \\
\ind[(J_{ij} - J_{i'j})(K_{ij} - K_{i'j}) < 0]  &= \ind[(r_{ij} - r_{ij'})(r'_{ij} - r'_{i'j}) < 0]. \label{eq:theta2}
\end{align}
Since $r_{ij} > r_{i'j}$ yields $\yb_{ij} \succ \yb_{i'j}$ under the input prompt $xb_j$ and language model $\btheta_t$, \eqref{eq:theta1} becomes 
\begin{align}
    \ind[(J_{ij} - J_{i'j})(K_{ij} - K_{i'j}) > 0] &= \ind[r_{ij} > r_{i'j}]\ind[r'_{ij} > r'_{i'j}] + \ind[r_{ij} < r_{i'j}]\ind[r'_{ij} < r'_{i'j}]\notag \\
    &= \ind[\yb_{ij} \succ \yb_{i'j} | \xb_j, \btheta_{t}]\ind[\yb_{ij} \succ \yb_{i'j} | \xb_j, \btheta_{t-1}]\notag \\
    &\quad + \ind[\yb_{ij} \prec \yb_{i'j} | \xb_j, \btheta_{t}]\ind[\yb_{ij} \prec \yb_{i'j} | \xb_j, \btheta_{t-1}].
\end{align}
As a result, since $\yb_{ij}$ are i.i.d. given $\xb_j$, the expectation of first part of the Kendall's Tau coefficient is 
\begin{align}
    &\EE[\ind[(J_{ij} - J_{i'j})(K_{ij} - K_{i'j}) > 0]] -  \EE[\ind[(J_{ij} - J_{i'j})(K_{ij} - K_{i'j}) < 0]]\notag \\ 
    &= \EE_{\yb_{ij}, \yb_{i'j} \sim \pi_{\btheta_t}(\cdot | \xb_j)}[\ind[\yb_{ij} \succ \yb_{i'j} | \xb_j, \btheta_{t}]\ind[\yb_{ij} \succ \yb_{i'j} | \xb_j, \btheta_{t-1}]]\notag \\
    &\quad + \EE_{\yb_{ij}, \yb_{i'j} \sim \pi_{\btheta_t}(\cdot | \xb_j)}[\ind[\yb_{ij} \prec \yb_{i'j} | \xb_j, \btheta_{t}]\ind[\yb_{ij} \prec \yb_{i'j} | \xb_j, \btheta_{t-1}]] \notag \\
    &\quad- \EE_{\yb_{ij}, \yb_{i'j} \sim \pi_{\btheta_t}(\cdot | \xb_j)}[\ind[\yb_{ij} \prec \yb_{i'j} | \xb_j, \btheta_{t}]\ind[\yb_{ij} \succ \yb_{i'j} | \xb_j, \btheta_{t-1}]]\notag \\
    &\quad - \EE_{\yb_{ij}, \yb_{i'j} \sim \pi_{\btheta_t}(\cdot | \xb_j)}[\ind[\yb_{ij} \succ \yb_{i'j} | \xb_j, \btheta_{t}]\ind[\yb_{ij} \prec \yb_{i'j} | \xb_j, \btheta_{t-1}]] \notag \\
    &= \EE_{\yb_{ij}, \yb_{i'j} \sim \pi_{\btheta_t}(\cdot | \xb_j)}[\ind[\yb_{ij} \succ \yb_{i'j} | \xb_j, \btheta_{t}](\ind[\yb_{ij} \succ \yb_{i'j} | \xb_j, \btheta_{t-1}] - \ind[\yb_{ij} \prec \yb_{i'j} | \xb_j, \btheta_{t-1}])] \notag\\
    &\quad + \EE_{\yb_{ij}, \yb_{i'j} \sim \pi_{\btheta_t}(\cdot | \xb_j)}[\ind[\yb_{ij} \prec \yb_{i'j} | \xb_j, \btheta_{t}](\ind[\yb_{ij} \prec \yb_{i'j} | \xb_j, \btheta_{t-1}] - \ind[\yb_{ij} \succ \yb_{i'j} | \xb_j, \btheta_{t-1}])] \notag \\
    &= \EE_{\yb_{ij}, \yb_{i'j} \sim \pi_{\btheta_t}(\cdot | \xb_j)}[\ind[\yb_{ij} \succ \yb_{i'j} | \xb_j, \btheta_{t}](1 - 2\ind[\yb_{ij} \prec \yb_{i'j} | \xb_j, \btheta_{t-1}])] \notag \\
    &\quad - \EE_{\yb_{ij}, \yb_{i'j} \sim \pi_{\btheta_t}(\cdot | \xb_j)}[\ind[\yb_{ij} \prec \yb_{i'j} | \xb_j, \btheta_{t}](1 - 2\ind[\yb_{ij} \prec \yb_{i'j} | \xb_j, \btheta_{t-1}])] \notag\\
    &= \EE_{\yb_{ij}, \yb_{i'j} \sim \pi_{\btheta_t}(\cdot | \xb_j)}[\ind[\yb_{ij} \succ \yb_{i'j} | \xb_j, \btheta_{t}] - \ind[\yb_{ij} \prec \yb_{i'j} | \xb_j, \btheta_{t}] ] \notag \\
    &\quad +2 \EE_{\yb_{ij}, \yb_{i'j} \sim \pi_{\btheta_t}(\cdot | \xb_j)}[\ind[\yb_{ij} \prec \yb_{i'j} | \xb_j, \btheta_{t-1}](\ind[\yb_{ij} \prec \yb_{i'j} | \xb_j, \btheta_t] - \ind[\yb_{ij} \succ \yb_{i'j} | \xb_j, \btheta_t])] \notag \\
    &= 0 + 2 \EE_{\yb_{ij}, \yb_{i'j} \sim \pi_{\btheta_t}(\cdot | \xb_j)}[\ind[\yb_{ij} \prec \yb_{i'j} | \xb_j, \btheta_{t-1}](1 - 2\ind[\yb_{ij} \succ \yb_{i'j} | \xb_j, \btheta_t])] \label{eq:final1}
\end{align}
where the second equation merge the terms together, and the third equation is due to the fact $\ind[\yb_{ij} \prec \yb_{i'j}] + \ind[\yb_{ij} \succ \yb_{i'j}] = 1$, the forth equation reorganize the term and the fifth equation is due to the fact that $\EE_{\yb_{ij}, \yb_{i'j} \sim \pi_{\btheta_t}(\cdot | \xb_j)}[\ind[\yb_{ij} \succ \yb_{i'j} | \xb_j, \btheta_{t}] - \ind[\yb_{ij} \prec \yb_{i'j} | \xb_j, \btheta_{t}] ] = 0$ due to symmetry. Similarly by reversing the $\prec$ and $\succ$, we can write~\eqref{eq:final1} by
\begin{align}
&\EE[\ind[(J_{ij} - J_{i'j})(K_{ij} - K_{i'j}) > 0]] -  \EE[\ind[(J_{ij} - J_{i'j})(K_{ij} - K_{i'j}) < 0]]\notag \\ 
&= 0 + 2 \EE_{\yb_{ij}, \yb_{i'j} \sim \pi_{\btheta_t}(\cdot | \xb_j)}[\ind[\yb_{ij} \succ \yb_{i'j} | \xb_j, \btheta_{t-1}](1 - 2\ind[\yb_{ij} \prec \yb_{i'j} | \xb_j, \btheta_t])]. \label{eq:final2}   
\end{align}
Adding~\eqref{eq:final1} and~\eqref{eq:final2} together yields 
\begin{align}
&2\EE[\ind[(J_{ij} - J_{i'j})(K_{ij} - K_{i'j}) > 0]] -  \EE[\ind[(J_{ij} - J_{i'j})(K_{ij} - K_{i'j}) < 0]]\notag \\
&= 2 \EE_{\yb_{ij}, \yb_{i'j} \sim \pi_{\btheta_t}(\cdot | \xb_j)}[\ind[\yb_{ij} \succ \yb_{i'j} | \xb_j, \btheta_{t-1}] +  \ind[\yb_{ij} \prec \yb_{i'j} | \xb_j, \btheta_{t-1}]]\notag \\
&\quad - 4 \EE_{\yb_{ij}, \yb_{i'j} \sim \pi_{\btheta_t}(\cdot | \xb_j)}[\ind[\yb_{ij} \succ \yb_{i'j} | \xb_j, \btheta_{t}]\ind[\yb_{ij} \prec \yb_{i'j} | \xb_j, \btheta_{t-1}]] \notag \\
&\quad - 4 \EE_{\yb_{ij}, \yb_{i'j} \sim \pi_{\btheta_t}(\cdot | \xb_j)}[\ind[\yb_{ij} \prec \yb_{i'j} | \xb_j, \btheta_{t}]\ind[\yb_{ij} \succ \yb_{i'j} | \xb_j, \btheta_{t-1}]] \notag \\
&= 2 - 8 \EE_{\yb_{ij}, \yb_{i'j} \sim \pi_{\btheta_t}(\cdot | \xb_j)}[\ind[\yb_{ij} \succ \yb_{i'j} | \xb_j, \btheta_{t}]\ind[\yb_{ij} \prec \yb_{i'j} | \xb_j, \btheta_{t-1}]], \label{eq:final}
\end{align}
    where the final equation is because $\EE[\ind[\yb_{ij} \prec \yb_{i'j} | \xb_j, \btheta_{t}]\ind[\yb_{ij} \succ \yb_{i'j} | \xb_j, \btheta_{t-1}]] = \EE[\ind[\yb_{ij} \succ \yb_{i'j} | \xb_j, \btheta_{t}]\ind[\yb_{ij} \prec \yb_{i'j} | \xb_j, \btheta_{t-1}]$ due to symmetry. Divide~\eqref{eq:final} by 2 yields the claimed result.
\end{proof}

\rev{
\section{Additional Results}

\subsection{Sustainability of Self-Rewarding} \label{sec:app-456iteration}

\begin{table}[t]
    \centering
    \caption{
        Results for SRLM and \ours using Llama-3 for six iterations. ↑ and ↓ indicate the performance improvement and degradation compared to the method's last iteration.
    }

      \resizebox{0.88\textwidth}{!}{

\begin{tabular}{ll|cccccc}
    \toprule
    Method & Iteration & Arc-Easy & Arc-Challenge & OpenBookQA & SIQA  & GSM8K & \multicolumn{1}{l}{Average} \\
    \midrule
    Initial & M0    & 86.29 & 80.37 & 86.00 & 68.6  & 78.01 & 79.85  \\
    SFT   & M1    & 86.78 & 80.14 & 86.40 & 69.50 & 78.39 & 80.24  \\
    \midrule
    \multirow{5}[2]{*}{SRLM} & M2    & 87.79 ↑ & 80.38 ↑ & 87.80 ↑ & 70.95 ↑ & 78.01 ↓ & 80.99 ↑ \\
          & M3    & 87.17 ↓ & 81.23 ↑ & 87.30 ↓ & 70.37 ↓ & 77.48 ↓ & 80.71 ↓ \\
          & M4    & 86.07 ↓ & 78.33 ↓ & 87.80 ↑ & 68.58 ↓ & 75.83 ↓ & 79.32 ↓ \\
          & M5    & 84.34 ↓ & 76.53 ↓ & 85.80 ↓ & 66.84 ↓ & 64.22 ↓ & 75.55 ↓ \\
          & M6    & 76.22 ↓ & 72.36 ↓ & 76.00 ↓ & 59.06 ↓ & 59.29 ↓ & 68.59 ↓ \\
    \midrule
    \multirow{5}[2]{*}{\ours} & M2    & 88.89 ↑ & 80.89 ↑ & 88.00 ↑ & 69.79 ↑ & 81.04 ↑ & 81.72 ↑ \\
          & M3    & 89.52 ↑ & 83.36 ↑ & 90.20 ↑ & 72.06 ↑ & 81.73 ↑ & 83.37 ↑ \\
          & M4    & 89.56 ↑ & 82.68 ↓ & 90.80 ↑ & 72.93 ↑ & 82.26 ↑ & 83.65 ↑ \\
          & M5    & 89.35 ↓ & 82.08 ↓ & 90.20 ↓ & 72.06 ↓ & 81.73 ↓ & 83.08 ↓ \\
          & M6    & 88.85 ↓ & 81.57 ↓ & 89.60 ↓ & 71.14 ↓ & 82.49 ↑ & 82.73 ↓ \\
    \bottomrule
\end{tabular}%
    
      }

      \label{tab:app-reb_456iteration}%
    \end{table}%

To explore continuous improvements and further validate the effectiveness of the introduced regularization, we conduct additional experiments with SRLM (baseline method) and \ours (ours) using Llama-3 for the 4th, 5th, and 6th iterations.
From the results in Table~\ref{tab:app-reb_456iteration}, we have the following findings: (1) \ours converges at the 4th iteration (M4) while the baseline method SRLM started performance degradation much earlier at the 2nd iteration (M2). This shows that \ours helps stabilize the self-improvement training by mitigating the rewarding bias issue. (2) \ours does not seriously harm the performance after convergence (i.e., during M5 and M6), while SRLM drastically drops the performance. This suggests that adding this consistency-based regularization is beneficial to preventing model from degeneration in the long term.

\subsection{Difference between \ours Loss and weighted DPO Loss} \label{sec:app-cream-loss-difference}
We would like to emphasize that the final form of \ours, which combines a normal DPO and a reversed DPO, cannot simply be replaced by reducing the weight in a normal DPO (weight * DPO Loss). Specifically, the loss adopted by \ours is $C \log \sigma(r(y_+) - r(y_-)) + (1 - C) \log \sigma(r(y_-) - r(y_+))$ differs fundamentally from a weighted DPO loss $C \log \sigma(r(y_+) - r(y_-))$. From the optimization perceptive, the latter weighted DPO loss behaves similar with a regular DPO loss with a smaller learning rate. In addition, since the preference probability $P(y_+ \prec y_-) = \sigma(r(y_+) - r(y_-))$ and $P(y_+ \succ y_-) = \sigma(r(y_-) - r(y_+))$, we would highlight that the \ours loss can be kind of viewed as a cross-entropy loss $C\log P(y_+ \succ y_-) + (1 - C) \log P(y_- \succ y_+)$ with label-smoothing factor $C$. Thus, The weighted DPO loss cannot deliver such a observation. 

\paragraph{Detailed Derivation.}
The loss used by our \ours can be expressed as:
\[
C \cdot D + (1-C) \cdot RD,
\]
where \(C\) is the consistency rate, and \(D\) and \(RD\) are the DPO loss and ReverseDPO loss, respectively.

For the DPO loss, \(D = \log \sigma (a - b)\), where the sigmoid function \(\sigma\) converts the reward gap \((a-b)\) into a probability \(P\). For \(RD\), the reward gap is \((b-a)\). Recall that the sigmoid function has a property that \(\sigma(x) = 1 - \sigma(-x)\). Using this property, we can convert the sigmoid (gap part) of \(RD\) to \((1-P)\). Thus, the \(RD\) loss is \(\log (1-P)\), and the \(D\) loss is \(\log P\).

Combining these, we can derive
\[
C \cdot D + (1-C) \cdot RD = C \cdot \log P + (1-C) \cdot \log (1-P),
\]
which corresponds to a cross-entropy loss for a binary classification task (binary preference judgment). The proposed loss acts as a label smoothing to regularize the training. This regularization is expected to enhance the model's generalization ability and performance during training.

\paragraph{Gradient Analysis.}
To further understand the impact of the sum of normal DPO loss and reversed DPO loss, we analyze the gradient of the adopted loss function. 
Recall that \ours uses the 
\[
L = C \cdot D + (1-C) \cdot RD
\]
for training, where \(C\) is the consistency rate, and \(D\) and \(RD\) are the DPO loss and ReverseDPO loss, respectively.

We can simplify the DPO loss by substituting the reward gap 
\[
\left[\log \frac{\pi_\theta(y^+)}{\pi_\text{ref}(y^+)} - \log \frac{\pi_\theta(y^-)}{\pi_\text{ref}(y^-)}\right]
\]
as \(x_\theta\). Then, the \(D\) loss is written as \(\log \sigma(x_\theta)\), and the \(RD\) loss is written as \(\log \sigma(-x_\theta)\).

Recall that the sigmoid function \(\sigma\) has the property that \(\sigma(x) = 1 - \sigma(-x)\). Then, \(RD\) can be converted to \(\log (1-\sigma(x_\theta))\). Our regularized loss \(L\) is 
\[
C \log \sigma(x_\theta) + (1-C)\log (1-\sigma(x_\theta)).
\]

The gradient of \(L\) with respect to model parameters \(\theta\) is 
\[
\frac{\partial L}{\partial \theta} = (C - \sigma(x_\theta)) \frac{\partial x_\theta}{\partial \theta}.
\]

In particular, when \(C = \frac{1}{2}\), it pushes the model to learn \(\sigma(x_\theta) = \frac{1}{2}\), i.e., encouraging the model to have 
\[
\frac{\pi_\theta(y^+)}{\pi_\text{ref}(y^+)} = \frac{\pi_\theta(y^-)}{\pi_\text{ref}(y^-)}.
\]
So far, there is no theoretical evidence that the likelihoods of both \(y^+\) and \(y^-\) will decrease. However, based on empirical results from existing works~\citep{dpo,azar2023general,hong2024orpo,ethayarajh2024kto}, the \(\frac{\partial x_\theta}{\partial \theta}\) term will decrease the likelihoods of both \(y^+\) and \(y^-\) while increasing their gap. Thus, the combined optimization will potentially lead to:
(1) a decrease in the likelihoods of both \(y^+\) and \(y^-\),
(2) an increase in the gap between \(y^+\) and \(y^-\), while maintaining 
\[
\frac{\pi_\theta(y^+)}{\pi_\text{ref}(y^+)} \approx \frac{\pi_\theta(y^-)}{\pi_\text{ref}(y^-)}.
\]

\paragraph{Empirical Results.} We also take additional experiments to validate the SRLM using $\text{weight} * \text{DPO Loss}$, where $\text{weight} \in [-1.0, 1.0]$. For SRLM + Weight method, M3 is trained based on the best checkpoint of M2 across different weights. The results are shown in Table~\ref{tab:app-reb_weight-dpo}. Note that ``NA'' means the negative of the DPO loss leads to catastrophic forgetting, where the LLM fails to generate fluent sentences. According to the results, we observe that \ours outperforms the SRLM+weight method, indicating its effectiveness and irreplaceability.

\begin{table}[t]
    \centering
    \caption{
        Comparison of \ours and SRLM using weighted DPO loss on Llama-3. ``NA'' indicates that the method cannot generate fluent sentences. 
    }

      \resizebox{0.9\textwidth}{!}{

\begin{tabular}{lc|ccccc|c}
    \toprule
    Method & Iteration & Arc-Easy & Arc-Challenge & OpenBookQA & SIQA  & GSM8K & Average \\
    \midrule
    SRLM + Weight = -1.0 & M2    & NA    & NA    & NA    & NA    & NA    & NA \\
    SRLM + Weight = -0.5 & M2    & NA    & NA    & NA    & NA    & NA    & NA \\
    SRLM + Weight = -0.25 & M2    & NA    & NA    & NA    & NA    & NA    & NA \\
    SRLM + Weight = 0.25 & M2    & \textbf{88.97} & 80.38 & 87.00 & \textbf{71.39} & 78.7  & 81.29 \\
    SRLM + Weight = 0.50 & M2    & 86.49 & 79.61 & 87.60 & 70.42 & 79.00 & 80.62 \\
    SRLM + Weight = 1.00 & M2    & 87.79 & 80.38 & 87.80 & 70.95 & 78.01 & 80.99 \\
    \midrule
    \ours & M2    & 88.89 & \textbf{80.89} & \textbf{88.00} & 69.79 & \textbf{81.04} & \textbf{81.72} \\
    \midrule
    SRLM + Weight = -1.0 & M3    & 24.16 & 22.53 & 27.6  & 30.91 & 1.74  & 21.39 \\
    SRLM + Weight = -0.5 & M3    & NA    & NA    & NA    & NA    & NA    & NA \\
    SRLM + Weight = -0.25 & M3    & NA    & NA    & NA    & NA    & NA    & NA \\
    SRLM + Weight = 0.25 & M3    & 88.93 & 81.40 & 89.20 & 71.34 & 75.74 & 81.32 \\
    SRLM + Weight = 0.50 & M3    & 88.13 & 81.74 & 89.00 & 70.73 & 75.89 & 81.10 \\
    SRLM + Weight = 1.00 & M3    & 87.17 & 81.23 & 87.30 & 70.37 & 77.48 & 80.71 \\
    \midrule
    \ours & M3    & \textbf{89.52} & \textbf{83.36} & \textbf{90.20} & \textbf{72.06} & \textbf{81.73} & \textbf{83.37} \\
    \bottomrule
    \end{tabular}%

      }

      \label{tab:app-reb_weight-dpo}%
    \end{table}%

\subsection{Consistency Trend}
In \ours, if the consistency rate $C$ reaches to $0$, the preference order of the training data will be totally reversed, which may lead to overconfidence in the reversed order. However,  We would like to highlight that the consistency between $\theta_{t-1}$ and $\theta_t$ can rarely be as low as $C \approx 0$. This is because $C = 0$ means for any preference pair $y, y'$, two consecutive model $\theta$ and $\theta_{t-1}$ give totally reversed prefernence. For the scaled Kendal Tau, that means if the ranking for 5 generated responses is $A > B > C > D > E$, to reach $C=0$, we need to change the ranking to $E > D > C > B > A$, which is very rare in SRLM. Since $\theta_t$ is trained using the prefernece data given by $\theta_{t-1}$, they tend to have similar behavior and thus such a circumstance making $C = 0$ can hardly happen in practice.

Further, we analyze the distribution of the consistency and supply the results for 4th, 5th and 6th iterations as follows. The results in Table~\ref{tab:app-reb_consistency-trend} show the consistency trend for \ours. We can find that the initial consistency rate 0.39 is acceptable, which will not result in heavily relying on reverse preferences. Besides, the regularization of \ours is shown to encourage the consistency of rankings and maintain stability in the later rounds of training.

\begin{table}[t]
    \centering
    \caption{
        The distribution of the training samples' consistency rate $C$ in the six iterations of \ours.
    }

      \resizebox{0.95\textwidth}{!}{

\begin{tabular}{l|cccccc|c}
    \toprule
    \diagbox{Iteration}{Consistency Rate} & 0-0.2 & 0.2-0.4 & 0.4-0.6 & 0.6-0.8 & 0.8-1.0 & Total & Avg. $C$ \\
    \midrule
    M1 - M2 & 1329 (13.86\%) & 3129 (32.62\%) & 2977 (31.04\%) & 1647 (17.17\%) & 510 (5.32\%) & 9592 (100\%) & 0.39 \\
    M2 - M3 & 1 (0.01\%) & 10 (0.10\%) & 48 (0.50\%) & 448 (4.67\%) & 9085 (94.71\%) & 9592 (100\%) & 0.92 \\
    M3 - M4 & 3 (0.03\%) & 24 (0.25\%) & 195 (2.03\%) & 1249 (13.03\%) & 8118 (84.66\%) & 9589 (100\%) & 0.87 \\
    M4 - M5 & 2 (0.02\%) & 10 (0.10\%) & 121 (1.26\%) & 888 (9.26\%) & 8571 (89.36\%) & 9592 (100\%) & 0.89 \\
    M5 - M6 & 6 (0.06\%) & 109 (1.14\%) & 573 (5.97\%) & 2129 (22.20\%) & 6775 (70.63\%) & 9592 (100\%) & 0.81 \\
    \bottomrule
    \end{tabular}%

      }

      \label{tab:app-reb_consistency-trend}%
    \end{table}%

\subsection{Comparison of Ensemble Methods} \label{app:sec:cmp-ensemble}
we also include an ablation study on using conservative value estimation methods~\citep{coste2023reward} (\textbf{Ensemble-Worst}) and \textbf{Ensemble-Mean} in SRLMs. Specifically, ensemble method trains 3 different $\pi_\theta$ models with 3 different learning rates [7e-7, 1e-6, 3e-6] to serve as the ensemble reward models in each iteration, since no external models can be involved in the self-rewarding scenario. Then, these three models would independently reward and rank the responses:
\begin{itemize}
    \item Ensemble-Worst selects the minimum reward for ranking the response.
    \item Ensemble-Mean selects the average reward for ranking the response.
\end{itemize}
Table~\ref{tab:app-reb_ensemble} suggests that \ours has advantage against Ensemble methods in the alignment performance a cross iterations. Additionally, compared with the ensemble model which requires a batch of reward models, \ours only requires the model from the last one iteration and thus would be more efficient than the ensemble methods.

\paragraph{Further Discussion on Robust Preference Optimization}
In this paragraph, we would like to take in-depth discussion on the related work robust preference optimization~\citep{fisch2024robust}, in order to better understand the contribution of \ours in the self-rewarding scenario.
In details,~\citet{fisch2024robust} suggested injecting pessimism by ensemble or introducing the KL divergence $\mathrm{KL}(\pi_{\theta_t} \parallel \pi_{\text{ref}})$. However, we would like to highlight that this method cannot be directly applied in our task scenario, due to the nature of the self-rewarding setting.

First, regarding the reward distillation,~\citet{fisch2024robust} suggested distilling the (ensembled) reward information into the language models. However, knowledge distillation usually assumes distilling information from a large model into a small model~\citep{hinton2015distilling, buciluǎ2006model, mirzadeh2020improved}. This makes perfect sense when the reward is given (e.g., from human preference, an oracle reward model, or a more advanced model). However, in SRLMs, the reward preference is given by the language model in the last iteration. The model capacity does not change over time, so the distillation may not help improve the model. For instance, if we extract $r^*$ from language model $\pi_{\theta_{t-1}}$, the reward distillation (Eq. 7 in Fisch et al. 2024) becomes
\[
\theta_t = \arg\min_{\theta} \mathbb{E}\left(\log \frac{\pi_{\theta_{t-1}}(y_+ \mid x)}{\pi_{\theta_{t-1}}(y_- \mid x)} - \log \frac{\pi_{\theta}(y_+ \mid x)}{\pi_{\theta}(y_- \mid x)}\right)^2,
\]
and a trivial minimizer is $\theta_t = \theta_{t-1}$ with loss equal to zero. This indicates that if we use the same, single model for reward distillation, the model can hardly be improved.

Second, we would like to acknowledge that the ensembled model used in~\citet{fisch2024robust} and other works~\cite{coste2023reward, eisenstein2023helping, rame2024warm} can provide pessimism, especially when we have a batch of reward models. For example,~\citet{fisch2024robust} used 5 reward models in their Pessimistic Ensemble DPO (e-DPO) variant, which has the best performance. However, in the \textbf{self-rewarding} setting, each reward model is trained by ourselves, so obtaining a reward model would be costly.

For example, let's assume $\pi_{\text{ref}}$ is the reference model, $\pi_1$ is the model trained by SFT, $\pi_2$ is the first round of self-rewarding training, and $\pi_3$ is the second round, etc.
we would see such ensemble will not work for training $\pi_2$, and for $\pi_3$, will ensemble 2 models; $\pi_4$ will ensemble 3 models. Note that though we can use different data subsets or hyperparameters to obtain multiple reward models in the earlier iterations, it can bring additional training costs. It is obvious that this algorithm needs to store all models $\pi_1, \pi_2, \dots, \pi_M$ as the iteration progresses. In the case where more models are required to improve the ensemble performance (e.g., we need to ensemble $5$ separate $\pi_1$ for Pessimistic Ensemble DPO), this storage and computation consumption increase significantly. In contrast, \ours only requires storing and evaluating the latest two checkpoints, which is more efficient compared with the standard ensemble methods.

Finally, we would like to put more theoretical remarks on the regularization which is also discussed in Section 3.3 in~\citet{fisch2024robust}. In detail,~\citet{fisch2024robust} considers the KL divergence between $\pi$ and $\pi_{\text{ref}}$ as
\[
\mathrm{KL}(\pi_{\text{ref}} \parallel \pi) \propto \mathbb{E}_x \left[ \int \pi_{\text{ref}}(y \mid x) \log \pi(y \mid x) \, \mathrm{d}y \right] = \mathbb{E}_{x, y \sim \pi_{\text{ref}}} \log \pi(y \mid x), \qquad \qquad (1)
\]
which is the same as conducting SFT with respect to $\pi_{\text{ref}}$. Ours considers the KL divergence for the preference model, denoted by
\[
\mathrm{KL}(u \parallel P_t(y_1 \succ y_2)) \propto \mathbb{E}_{x, y_1, y_2 \sim \pi_{t-1}} \log P_t(y_1 \succ y_2) = \mathbb{E}_{x, y_1, y_2 \sim \pi_{t-1}} \log \sigma\left(\Delta^R_t(y_1, y_2)\right) \quad (2)
\]
with $\Delta^R_t(y_1, y_2) = \log \frac{\pi_\theta(y_1 \mid x_t)}{\pi_{\text{ref}}(y_1 \mid x_t)} - \log \frac{\pi_\theta(y_2 \mid x_t)}{\pi_{\text{ref}}(y_2 \mid x_t)}$.

One can find the fundamental differences between (1) and (2):
\begin{enumerate}
    \item The regularization of (1) is to control the behavior of $\pi$ similar to the reference policy $\pi_{\text{ref}}$. It is more suitable for one iteration of DPO. In the SRLM where the LMs are expected to iteratively improve their performance by themselves, intuitively we should hypothesize that $\pi_t$ becomes more and more distant from $\pi_{\text{ref}}$. As a result, using (1) in SRLM would be too conservative for the LM to improve itself.
    \item The regularization of (2) is to prevent the preference model from being overconfident on preference pairs from the last iteration model. Obviously, it fits the SRLM better by letting the SRLM improve from the correct self-labeled preference pairs but also not be too conservative on the previous iteration or the baseline model.
\end{enumerate}

To summarize, we believe \ours and~\citet{fisch2024robust} fall into different categories: in SRLM and \ours, we are more focused on regularizing the preference pairs, while performing model ensemble would be challenging since the model is updating and depends on each other (iteration). We also acknowledge that in regular DPO or with different reward models, the ensemble methods or the regularization methods are feasible and useful.

\begin{table}[t]
    \centering
    \caption{
        Comparison of SRLM, Ensemble-Worst, Ensemble-Mean and \ours on Llama-3. The best performance among methods in each iteration is highlighted in \textbf{bold}.
    }

      \resizebox{0.85\textwidth}{!}{

\begin{tabular}{ll|cccccc}
    \toprule
    Method & Iteration & Arc-Easy & Arc-Challenge & OpenBookQA & SIQA  & GSM8K & Average \\
    \midrule
    SRLM  & M2    & 87.79 & 80.38 & 87.80 & \textbf{70.95} & 78.01 & 80.99 \\
    \midrule
    Ensemble-Worst & M2    & 88.00 & 79.78 & 87.00 & 69.96 & 78.54 & 80.66 \\
    Ensemble-Mean & M2    & 88.47 & 80.80 & \textbf{88.00} & 69.50 & 80.67 & 81.49 \\
    \midrule
    \ours & M2    & \textbf{88.89} & \textbf{80.89} & \textbf{88.00} & 69.79 & \textbf{81.04} & \textbf{81.72} \\
    \midrule
    \midrule
    SRLM  & M3    & 87.17 & 81.23 & 87.30 & 70.37 & 77.48 & 80.71 \\
    \midrule
    Ensemble-Worst & M3    & 87.75 & 80.89 & 86.80 & 70.78 & 78.70 & 80.98 \\
    Ensemble-Mean & M3    & 88.85 & 80.89 & 87.00 & 69.96 & 79.98 & 81.34 \\
    \midrule
    \ours & M3    & \textbf{89.52} & \textbf{83.36} & \textbf{90.20} & \textbf{72.06} & \textbf{81.73} & \textbf{83.37} \\
    \bottomrule
    \end{tabular}%

      }

      \label{tab:app-reb_ensemble}%
    \end{table}%

\subsection{\ours with Data Consistency}
For \ours, The introduced (in)consistency serves as the uncertainty quantification for the model to prevent the model itself being overconfident due to the stochastic training noise or incorrectly labeled prefernece data. Such a uncertainty, usually referred to as epistemic uncertainty, is a property for each model.
We further add two variants to explore treating the consistency as a property of the data instead of the model: (1) \textbf{Threshold = x}: This variant uses the normal DPO for samples (confident dataset) whose consistency rate $C > x$, and uses ours C*DPO + (1-C)*RDPO to regularize the training for other samples (unconfident dataset). (2) \textbf{\ours w/ Dynamic Consistency}: Instead of original \ours using the average consistency rate across the dataset, this method uses dynamic consistency rate for each sample, using each sample's own consistency rate to regularize the training.

As shown in Table~\ref{tab:app-reb_data-property}, we observe that as the threshold increases beyond a certain value (including more samples for regularization), the performance gains converge. Actually, \ours is compatible with using a threshold to fine-grainedly select the data to regularize, however, this inevitably introduces an additional hyperparameter. Besides, \ours already achieves sufficiently good performance, without this added complexity. Compared to the ``\ours w/ Dynamic Consistency'' variant, our method, which utilizes the average consistency rate, significantly reduces the variance in estimating dataset uncertainty, resulting in improved performance.

\begin{table}[t]
    \centering
    \caption{
        Results for treating the consistency as a property of the data instead of the model using \ours on Llama-3. The best performance among methods in each iteration is highlighted in \textbf{bold}.
    }

      \resizebox{0.85\textwidth}{!}{

\begin{tabular}{ll|cccccc}
    \toprule
    Method & Iteration & Arc-Easy & Arc-Challenge & OpenBookQA & SIQA  & GSM8K & Average \\
    \midrule
    SRLM (Threshold = 0.0) & M2    & 87.79 & 80.38 & 87.80 & \textbf{70.95} & 78.01 & 80.99 \\
    \midrule
    Threshold = 0.1 & M2    & 88.13 & 80.89 & 88.20 & 70.52 & 78.70 & 81.29 \\
    Threshold = 0.3 & M2    & 88.38 & 80.29 & \textbf{89.20} & 68.99 & 79.83 & 81.34 \\
    Threshold = 0.5 & M2    & \textbf{89.10} & 80.89 & 87.20 & 69.70 & 80.52 & 81.48 \\
    Threshold = 0.7 & M2    & 88.72 & 80.20 & 88.00 & 69.24 & 80.82 & 81.4 \\
    Threshold = 0.9 & M2    & 88.55 & \textbf{81.48} & 88.40 & 70.78 & 79.45 & \textbf{81.73} \\
    \midrule
    \ours (Threshold = 1.0) & M2    & 88.89 & 80.89 & 88.00 & 69.79 & \textbf{81.04} & 81.72 \\
    \midrule
    \ours + dynamic & M2    & 88.13 & 80.80 & 88.00 & 69.50 & 79.38 & 81.16 \\
    \midrule
    \midrule
    SRLM (Threshold = 0.0) & M3    & 87.17 & 81.23 & 87.30 & 70.37 & 77.48 & 80.71 \\
    \midrule
    Threshold = 0.1 & M3    & 88.38 & 80.29 & 87.20 & 70.98 & 79.83 & 81.34 \\
    Threshold = 0.3 & M3    & 88.64 & 80.38 & 88.00 & 71.34 & 80.52 & 81.78 \\
    Threshold = 0.5 & M3    & 89.10 & 81.48 & 89.80 & 71.08 & 79.91 & 82.27 \\
    Threshold = 0.7 & M3    & 89.48 & 81.31 & 88.60 & 71.55 & 80.67 & 82.32 \\
    Threshold = 0.9 & M3    & 89.27 & 83.02 & \textbf{90.80} & \textbf{72.31} & 81.50 & \textbf{83.38} \\
    \midrule
    \ours (Threshold = 1.0) & M3    & \textbf{89.52} & \textbf{83.36} & 90.20 & 72.06 & \textbf{81.73} & 83.37 \\
    \midrule
    \ours + dynamic & M3    & 88.85 & 81.57 & 89.20 & 71.65 & 79.91 & 82.24 \\
    \bottomrule
    \end{tabular}%

      }

      \label{tab:app-reb_data-property}%
    \end{table}%

\subsection{Applicability of \ours}
\paragraph{Applicability to Larger Models.}
Due to the limited computational resources, our experiments are mainly conducted on 7B-level LLMs such as Llama-2 and Llama-3. However, we believe this is still meaningful to democratize the self-rewarding paradigm to LLMs of community affordable size, such as 7B models. Advanced and larger LLMs (e.g., ChatGPT) are not always accessible, especially in specific application scenarios like those involving medical or privacy-sensitive data.
It's important to note that our method is theoretically applicable to LLMs of any size (not limited to 7B). 
To validate our method with larger LLMs, we test the \ours on Llama-2-13B~\citep{llama2} model without any hyperparameter tunings. The results in Table~\ref{tab:app-reb_13b} confirm that our method remains effective for the 13B LLM, demonstrating the generalizability of our approach across different model sizes.

\begin{table}[t]
    \centering
    \caption{
        Results of SRLM and \ours with Llama-2-13B.
    }

      \resizebox{0.7\textwidth}{!}{

\begin{tabular}{l|cccccc}
    \toprule
    Method & Arc-Easy & Arc-Challenge & OpenBookQA & SIQA  & GSM8K & Average \\
    \midrule
    M0    & 67.47  & 56.31  & 67.00 & 47.54  & 35.03  & 54.67  \\
    M1    & 68.27  & 57.42  & 67.40  & 47.85  & 36.09  & 55.41  \\
    \midrule
    SRLM M2 & 69.61  & 57.00  & 64.00  & 52.10  & 31.69  & 54.88  \\
    SRLM M3 & 62.08  & 53.67  & 61.20  & 48.93  & 20.77  & 49.33  \\
    \midrule
    \ours M2 & 69.19  & 57.17  & 69.60  & 48.57  & 36.01  & 56.11  \\
    \ours M3 & 68.56  & 59.04  & 73.20  & 49.90  & 36.62  & 57.46  \\
    \bottomrule
    \end{tabular}%

      }

      \label{tab:app-reb_13b}%
    \end{table}%
\paragraph{Applicability to unaligned Models.}
First, our method \ours does require the model to have some initial alignment capability, as the adopted DPO rewarding relies on the model being aligned, otherwise the rewards would not be meaningful~\citep{dpo}. Note that in the self-rewarding scenario, the original SRLM~\citep{selfllm} also requires the model to be ``post-trained'' version, where they adopt Llama-2-70B-Chat~\citep{llama2} as the initial model (M0).
We conduct additional experiments to test the effectiveness of \ours with LLMs that are not post-trained. Specifically, we use Llama-3-8B-NO-Chat-Version~\citep{llama3} as the base model (M0). The results in Table~\ref{tab:app-reb_unaligned}, reveal that: (1) M0 performs poorly due to its lack of instruction following ability. However, after training on a small amount of seed SFT data, M1 demonstrates improved performance, especially on the reasoning task GSM8K. (2) Both SRLM and \ours can effectively boost the performance, while \ours often provides greater gains. (3) We find that the performance gains of \ours in the 3rd iteration (M2 $\rightarrow$ M3) exceed those in the 2nd iteration (M1 $\rightarrow$ M2). This may be because M2 has better alignment than M1, and the effectiveness of our method relies on the model's alignment capability. 
Based on these findings, we conclude that SRLM-like methods can still be applied to pre-trained models, provided they have undergone slight alignment (e.g., through SFT training). 

\begin{table}[t]
    \centering
    \caption{
        Results of applying SRLM and \ours to unaligned Llama-3-8B-NO-Chat-Version.
    }

      \resizebox{0.7\textwidth}{!}{

\begin{tabular}{l|cccccc}
    \toprule
    Method & Arc-Easy & Arc-Challenge & OpenBookQA & SIQA  & GSM8K & Average \\
    \midrule
    M0    & 30.64 & 28.50 & 20.80 & 23.95 & 4.02  & 21.58 \\
    M1    & 31.23 & 30.21 & 33.40 & 26.51 & 51.25 & 34.52 \\
    \midrule
    SRLM M2 & 29.80 & 28.67 & 36.00 & 23.44 & 50.19 & 33.62 \\
    SRLM M3 & 31.82 & 30.03 & 39.00 & 28.40 & 48.52 & 35.55 \\
    \midrule
    \ours M2 & 31.61 & 28.75 & 35.60 & 28.66 & 53.00 & 35.52 \\
    \ours M3 & 39.18 & 35.07 & 49.00 & 33.62 & 56.48 & 42.67 \\
    \bottomrule
    \end{tabular}%

      }

      \label{tab:app-reb_unaligned}%
    \end{table}%

\subsection{Prompt-rewarding for Regularization}
In this section, we discuss the usage of prompt-rewarding with regularization, and compare its effectiveness of the DPO rewarding. We mainly consider two methods: (1) P-CREAM: \ours with prompt-rewarding method. (2) Distilled DPO~\citep{fisch2024robust} which leverages the prompt-rewarding to apply regularization to the DPO training.
The results are shown in Table~\ref{tab:app-reb_distilled-dpo}. We can find that (1) prompt-rewarding is not effective as DPO rewarding for both SRLM and \ours. (2) The degradation of Distilled DPO in M3 iteration indicates its inapplicability for continuous improvements, compared to our method \ours.

\begin{table}[t]
    \centering
    \caption{
        Results for P-CREAM and Distilled DPO~\citep{fisch2024robust}.
    }

      \resizebox{0.88\textwidth}{!}{

\begin{tabular}{lc|cccccc}
\toprule
Method & Iteration & Arc-Easy & Arc-Challenge & OpenBookQA & SIQA  & GSM8K & Average \\
\midrule
SRLM  & M2    & 87.79 & 80.38 & 87.80 & 70.95 & 78.01 & 80.99 \\
P-SRLM & M2    & 84.64 & 76.79 & 80.40 & 67.81 & 78.47 & 77.62 \\
\midrule
Distilled DPO & M2    & 87.20 & 80.12 & 85.40 & 69.65 & 79.76 & 80.43 \\
\midrule
P-CREAM & M2    & 87.67 & 78.58 & 86.40 & 68.47 & 79.76 & 80.18 \\
\ours & M2    & 88.89 & 80.89 & 88.00 & 69.79 & 81.04 & 81.72 \\
\midrule
SRLM  & M3    & 87.17 & 81.23 & 87.30 & 70.37 & 77.48 & 80.71 \\
P-SRLM & M3    & 83.75 & 76.28 & 80.20 & 66.63 & 78.99 & 77.17 \\
\midrule
Distilled DPO & M3    & 86.15 & 79.01 & 85.00 & 68.68 & 78.62 & 79.49 \\
\midrule
P-CREAM & M3    & 86.49 & 78.33 & 87.40 & 69.70 & 81.05 & 80.59 \\
\ours & M3    & 89.52 & 83.36 & 90.20 & 72.06 & 81.73 & 83.37 \\
\bottomrule
\end{tabular}%

      }

      \label{tab:app-reb_distilled-dpo}%
    \end{table}%

\begin{table}[hbp]
    \small
    
    \centering
    \caption{\rev{Results for different self-rewarding methods using Llama-3.}}

    \begin{tabular}{l|c|ccc|ccc}
    \toprule
    \multirow{2}{*}{\diagbox{Dataset}{Method}}
     & M1    & \multicolumn{3}{c|}{M2} & \multicolumn{3}{c}{M3} \\
     \cmidrule{2-8}
          & SFT   & SRLM  & P-SRLM & \ours & SRLM  & P-SRLM & \ours \\
    \midrule
    Arc-Easy & 86.78 & 87.79 \textcolor{deepred}{\textbf{↑}} & 84.64 \textcolor{softblue}{\textbf{↓}} & \textbf{88.89 \textcolor{deepred}{\textbf{↑}}} & 87.17 \textcolor{softblue}{\textbf{↓}} & 83.75 \textcolor{softblue}{\textbf{↓}} & \textbf{89.52 \textcolor{deepred}{\textbf{↑}}} \\
    Arc-Challenge & 80.14 & 80.38 \textcolor{deepred}{\textbf{↑}} & 76.79 \textcolor{softblue}{\textbf{↓}} & \textbf{80.89 \textcolor{deepred}{\textbf{↑}}} & 81.23 \textcolor{deepred}{\textbf{↑}} & 76.28 \textcolor{softblue}{\textbf{↓}} & \textbf{83.36 \textcolor{deepred}{\textbf{↑}}} \\
    OpenBookQA & 86.40 & 87.80 \textcolor{deepred}{\textbf{↑}} & 80.40 \textcolor{softblue}{\textbf{↓}} & \textbf{88.00 \textcolor{deepred}{\textbf{↑}}} & 87.30 \textcolor{softblue}{\textbf{↓}} & 80.20 \textcolor{softblue}{\textbf{↓}} & \textbf{90.20 \textcolor{deepred}{\textbf{↑}}} \\
    SIQA  & 69.50 & \textbf{70.95 \textcolor{deepred}{\textbf{↑}}} & 67.81 \textcolor{softblue}{\textbf{↓}} & 69.79 \textcolor{deepred}{\textbf{↑}} & 70.37 \textcolor{softblue}{\textbf{↓}} & 66.63 \textcolor{softblue}{\textbf{↓}} & \textbf{72.06 \textcolor{deepred}{\textbf{↑}}} \\
    GSM8K & 78.39 & 78.01 \textcolor{softblue}{\textbf{↓}} & 78.47 \textcolor{deepred}{\textbf{↑}} & \textbf{81.04 \textcolor{deepred}{\textbf{↑}}} & 77.48 \textcolor{softblue}{\textbf{↓}} & 78.99 \textcolor{deepred}{\textbf{↑}} & \textbf{81.73 \textcolor{deepred}{\textbf{↑}}} \\
    \bottomrule
    \end{tabular}%

                \label{tab:prompt-rewarding}
    
    \end{table}

} % end of revision color

\end{document}